\definecolor{mygray}{gray}{.9}
\newtheorem{theorem}{Theorem}
\newtheorem{lemma}{Lemma}
\newtheorem{defn}{Definition}
\newtheorem{remark}{Remark}
\newtheorem{assum}{Assumption}
\begin{document}
%
\title{ Online Orthogonal Dictionary Learning Based on Frank-Wolfe Method}
\author{Ye~Xue,~\IEEEmembership{Graduate Student Member,~IEEE,}
	and Vincent~LAU,~\IEEEmembership{Fellow,~IEEE}
	\thanks{Y. Xue and V. Lau are with the Department of Electronic and Computer Engineering, Hong Kong University of Science and Technology, Hong Kong (E-mail: yxueaf,  eeknlau@ust.hk).}}%

\markboth{Journal of \LaTeX\ Class Files,~Vol.~14, No.~8, August~2015}%
{Shell \MakeLowercase{\textit{et al.}}: Bare Demo of IEEEtran.cls for IEEE Journals}



\maketitle

\begin{abstract}
Dictionary learning is a widely used unsupervised learning method in signal processing and machine learning. Most existing works on  dictionary learning adopt an offline approach, and there are  two main offline ways of conducting it. One is to  alternately optimize both the dictionary and the sparse code, while the other is to optimize the dictionary by restricting it over the orthogonal group. The latter,   called  orthogonal dictionary learning,  has a lower implementation complexity, and hence,  is more favorable for low-cost devices. However, existing schemes for orthogonal dictionary learning only work with  batch data and cannot be implemented online, making them inapplicable  for real-time applications. This paper thus proposes a novel online orthogonal  dictionary  scheme to dynamically learn the dictionary from streaming data, without storing the historical data. The proposed scheme includes a novel  problem formulation and an efficient online algorithm design with convergence analysis. In the problem formulation, we  relax  the orthogonal constraint to enable an efficient online algorithm.  We then propose the design of a new Frank-Wolfe-based online algorithm with a convergence rate of $\mathcal{O}(\ln t/t^{1/4})$. The convergence rate in terms of key system parameters is also derived. Experiments with synthetic data and real-world internet of things (IoT) sensor readings demonstrate the effectiveness and  efficiency of the proposed online orthogonal dictionary learning scheme.
\end{abstract}

\begin{IEEEkeywords}
online learning, dictionary learning, Frank-Wolfe, sensor, convergence analysis.
\end{IEEEkeywords}

%
\IEEEpeerreviewmaketitle

\section{Introduction}
\IEEEPARstart{S}{parse} representation  of data has been widely used in signal processing, machine learning and data analysis, and shows a highly expressive and effective representation ability \cite{Aharon2006,mairal2007sparse,energydis,spatial}. It represents the data $\bm{y}\in \mathbb{R}^N$  by a  linear combination  $\bm{y}\approx\bm{D}^{true}\bm{x}$, where  $\bm{x}\in\mathbb{R}^M$ is a sparse code (i.e., the number of non-zero
entries of $\bm{x}$ is much smaller than $M$), and $\bm{D}^{true}\in\mathbb{R}^{N\times M}$ is the dictionary  that contains the compact information of $\bm{y}$. At the initial stage in this line of research, predefined dictionaries
based on a Fourier basis  and various types of wavelets were successfully used for signal processing. However, using a learned dictionary instead of a generic one has been shown to dramatically improve the performance on various tasks, e.g., image denoising  and classification.

One method to learn a dictionary is to  alternately optimize (AO) problems with both the dictionary and the sparse code as the variables {\cite{Aharon2006,mairal2007sparse,energydis,spatial,khodayar2021deep}}. In this approach,  the dictionary usually has no constraints {  or has a bounded norm constraint on each atom\footnote{One column of the dictionary matrix $\bm{D}$ is called an atom.} of the dictionary to prevent trivial solutions  \cite{energydis,spatial}.}  Another method is to restrict  the dictionary over the orthogonal group $\mathbb{O}(N,\mathbb{R})$ and solve the following optimization problem only for the dictionary \cite{sun2015complete1,bai2018subgradient, zhai2020complete}:
\begin{equation}\label{eq:DLo}  
\begin{aligned}
\underset{\bm{D}\in \mathbb{O}(N,\mathbb{R})}{\text{minimize}}\quad
& \frac{1}{T} \sum^T_{t=1}{\text Sp}(\bm{D}^{\text T}\bm{y}_t),
\\
\end{aligned}
\end{equation} 
where  ${\text Sp}(\cdot)$ is a sparsity-promoting function. This formulation is motivated by the fact that  the sparse code can be obtained as $\{\bm{x}_t=({\bm{D}}^{true})^T\bm{D}^{true}\bm{x}_t\approx({\bm{D}}^{true})^{\text T}\bm{y}_t\}^T_{t=1}$  if the dictionary $\bm{D}^{true}$ is orthogonal. 

 To  enable efficient data processing,  the latter method, called  orthogonal dictionary learning (ODL), is more favorable than the AO  method for the following reasons. First,  ODL  has a lower computational complexity. It updates only the dictionary at each iteration, while the AO method requires  solving two sub-problems respectively for the dictionary and the  sparse code. {  Second, ODL has a lower sample complexity since it restricts the dictionary over a smaller optimization space.\footnote{Though the orthogonal constraint seemingly brings performance loss as it narrows the optimization space,  ODL has a  performance competitive with that of the AO method in real applications \cite{bao2013fast}.}   Third, ODL allows efficient  transmission of the	dictionary. This is because an $N\times N$ orthogonal matrix can be  represented by  $\frac{N(N-1)}{2}$ statistically independent angles via the Givens rotation,\footnote{Although matrices belonging to orthogonal group $\mathbb{O}(N)$, but not the special orthogonal group $\mathbb{SO}(N)$, cannot be directly factorized by the Givens rotations, a factorization can be obtained up to a permutation with a negative sign, e.g., by flipping two columns \cite{frerix2019approximating}.} and the angles can be quantized efficiently to achieve a minimum quantization loss \cite{yuen2012beamforming}. This angle-based transmission has been adopted in many wireless communication standards \cite{gast2013802}.}

Despite the benefits of ODL, however, the existing ODL methods  only work with batch data. In other words, they require  the whole data set to run the algorithm. Hence, they are not applicable in many  real-time applications,  including real-time network monitoring \cite{babu2001continuous},
sensor networks \cite{de2016iot},   and Twitter analysis \cite{bifet2010sentiment}, where data arrives continuously in rapid,  unpredictable, and unbounded streams. To deal with  streaming data, we propose an  online ODL approach that processes the data in a single sample or  in a mini-batch. The  online ODL problem can be regarded as  taking  ${ T}\to \infty$ in Problem (\ref{eq:DLo}), and  is formally formulated as
 \begin{equation}\label{eq:DLonline}  
\begin{aligned}
\underset{\bm{D}\in\mathbb{O}(N,\mathbb{R})}{\text{minimize}}\quad
& F(\bm{D})=\mathbb{E}_{\bm{y}\sim P}[{\text Sp}(\bm{D}^{\text T}\bm{y})],
\\
\end{aligned}
\end{equation}
 where $\bm{y}$ is the realization of the random variable $Y$ drawn from a distribution $P$. Since the distribution $P$ is usually unknown and the cost of computing the expectation is prohibitive, the main challenge is to  solve Problem (\ref{eq:DLonline}) without the accessibility of  $F(\bm{D})$ or its gradient $\nabla F(\bm{D})$. In this case,  we can only rely on the sampled data to calculate the approximation of the objective function or the gradient. These approximations will jeopardize the performance and the convergence of the algorithms compared to the case where the exact objective value and gradient are available \cite{johnson2013accelerating}. 
 
 {  Problem (\ref{eq:DLonline}) is an online constraint optimization problem and many general  algorithms are available  for solving  this type of problem}, such as regularized dual averaging (RDA) \cite{xiao2010dual} and stochastic mirror descent (SMD) \cite{zhou2017stochastic}.  However, none  can be directly applied to Problem (\ref{eq:DLonline}) due to the nonconvex constraint set and possibly nonconvex sparsity-promoting function, e.g., $\text{Sp}(\cdot)=-\|\cdot\|_p^p,(p\in\mathbb{N},\quad p>2)$ \cite{zhai2020complete,shen2020complete,blind}. { In this work, to enable an efficient  online algorithm with a convergence guarantee}, we propose a novel online ODL solution with a convex relaxation of the  orthogonal constraint in Problem (\ref{eq:DLonline}).   After the relaxation, the problem becomes a nonconvex optimization over a convex set.  {  One could  transform  this relaxed problem  into an unconstrained problem  with a composite nonconvex objective $\bar{F}(\bm{D})$ (see Example 3 in \cite[Section 1.1]{xiao2010dual}) and solve the transformed problem   by ProxSGD \cite{ghadimi2016mini}.  However,  the use of ProxSGD gives rise to o  issues: first, the proximal operator  in  ProxSGD creates a high per-iteration computational complexity; and second, ProxSGD  can only be guaranteed to converge to an $\epsilon$-stationary point (a point $\bm{D}^*$, such that $\mathbb{E}[\|\nabla \bar{F}(\bm{D}^*)\|^2_F]\leq \epsilon$)} when  the mini-batch size is increasing by $1/\epsilon$ \cite{ghadimi2016mini}. To achieve small error, the mini-batch size needs to be large, which is not suitable  for most online processors with limited memory.  

In this work, we propose a Frank-Wolfe-based \cite{frank1956algorithm} algorithm,  the Nonconvex Stochastic Frank-Wolfe (NoncvxSFW) method, to solve the relaxed problem directly without the problem transformation. NoncvxSFW  can achieve low-complexity per-iteration computation thanks to the  linear minimization oracle (LMO) in the Frank-Wolfe method.  We also prove that the proposed algorithm with a single sample or a  fixed mini-batch size can be guaranteed to converge to a stationary point of the relaxed online ODL problem.   The main contributions are summarized as follows.
\begin{itemize}
	\item {\bf Novel Online ODL Formulation}: We propose an  online ODL problem with an $\ell_3$-norm-based sparsity-promoting function and a convex relaxation of the orthogonal constraint. We prove that all the optimal solutions of the original problem are also the optimal solutions of the relaxed problem, which enables an efficient  online algorithm with guaranteed  convergence. 
	\item {\bf Online Frank-Wolfe-Based Algorithm}: We develop an online  algorithm, the NoncvxSFW method, to solve the relaxed optimization problem with a single sample or a fixed mini-batch size. The convergence is analyzed, and  its rate is  shown to be  $\mathcal{O}(\ln t/t^{1/4})$, where $t$ is the number of iterations.  As far as we are aware, this is the first non-asymptotic  convergence rate for  online nonconvex   optimization  using the Frank-Wolfe-based method.  The proposed algorithm and the corresponding theoretical results can also be generalized into  general online nonconvex problems with convex constraints. 
	\item {\bf Effective and Efficient Application on IoT Sensor Data Compression}: We provide extensive simulations with both synthetic data and a real-world IoT sensor data set. The simulation results demonstrate
	the effectiveness and efficiency of our proposed online ODL method. They also verify the correctness of our theoretical results. For the synthetic data, the proposed scheme can achieve superb
	performance in terms of the convergence rate and the recovery error, while for the real-world sensor data, it can achieve a better root-mean-square error (RMSE) with a higher compression ratio   for sensor data compression compared to the state-of-the-art baselines \cite{zhai2020complete,mokhtari2020stochastic,mairal2009online,spatial,akhtar2017nonparametric}.
\end{itemize}

   { The rest of the paper is organized as follows. In Section \ref{sec:sys} we illustrate the  signal model and present the  problem formulation for the online ODL. In Section \ref{sec:alg}, we present the Frank-Wolfe-based online algorithm with non-asymptotic convergence analysis. Application examples and numerical simulation results are provided in Section \ref{sec:app} and Section \ref{sec:exp}, respectively. Finally, Section \ref{sec:concl} summarizes the work.}
{  
\nomenclature[01]{\(\bm{x},\bm{X}\)}{Column vector and matrix}
\nomenclature[02]{\(\bm{X}_{n,:},\bm{X}_{:,i}\)}{The $n$-th row and the $i$-th column of matrix $\bm{X}$}
\nomenclature[03]{\(x_{i},X_{i,j}\)}{The $i$-th element of vector $\bm{x}$ and the element in the $i$-th row and the $j$-th column of matrix $\bm{X}$ }
\nomenclature[04]{\(\bm{e}_i\)}{Standard basis vector with a $1$ in the $i$-th coordinates and $0$ elsewhere}
\nomenclature[05]{\(\mathbb{O}(N,\mathbb{R})\)}{ $N$-dimensional orthogonal group with real-valued entries}
\nomenclature[06]{\(\mathbb{B}_{sp}(N,\mathbb{R})\)}{$N$-dimensional (closed) unit spectral ball}

\nomenclature[07]{\((\centerdot)^{\text T}, vec(\centerdot) \)}{Transpose and vectorization}
\nomenclature[08]{\(diag(\bm{x})\)}{Diagonal matrix  with  vector $\bm{x}$ on  its diagonal }
\nomenclature[09]{\(\lceil\cdot \rceil, \lfloor\cdot\rfloor, \lvert\cdot\rvert\)}{ Element-wise ceiling operator, element-wise floor operator and taking element-wise absolute
	value  }
\nomenclature[10]{\([a]\_b\)}{$a$ modulo $b$}
\nomenclature[11]{\([N]\)}{Set $\{1, 2,...,N\}$}
\nomenclature[12]{\(\{\bm{x}_i\}_{i=1}^N\)}{Set $\{\bm{x}_i; i\in [N]\}$}
\nomenclature[13]{\(Tr(\cdot)\)}{Trace}
\nomenclature[14]{\(\langle{\bm{X}},\bm{Y}\rangle\)}{General
	inner product of $\bm{X}$ and $\bm{Y}$}
\nomenclature[15]{\(\lVert\,{\cdot}\,\rVert_p \)}{$\ell_{p}$-norm of a vector or the induced   $\ell_{p}$-norm  of a matrix}
\nomenclature[16]{\(\lVert\,{\cdot}\,\rVert_F,\lVert\,{\cdot}\,\rVert_*\)}{Matrix  Frobenius norm and nuclear norm}
\nomenclature[17]{\(\lVert\,{\cdot}\,\rVert\)}{$\ell_2$-norm of a vector or the spectral norm of a matrix}
\nomenclature[18]{\(\odot,(\cdot)^{\odot p}\)}{Hadamard product and the element-wise $p$-th power }
\nomenclature[19]{\(\mathcal{P}_{\mathbb{O}(N,\mathbb{R})}(\bm{D})\)}{Projects $\bm{D}$ onto the orthogonal group }
\nomenclature[20]{\(Polar(\bm{D})\)}{Polar decomposition of matrix $\bm{D}$}
\nomenclature[21]{\(\mathbb{E}[\centerdot],\mathbb{E}_{t}[\centerdot]\)}{Expectation over all the randomness in the system, expectation conditioned on the randomness up until  time $t$}
\nomenclature[22]{\(\mathbb{E}_{\bm{x}}[\centerdot],\mathbb{E}_{\bm{x}\sim P}[\centerdot] \)}{Expectation over random variable $\bm{x}$, expectation over random variable $\bm{x}$ with distribution $P$}
\nomenclature[23]{\(\bm{x} \overset{i.i.d}{\sim} \mathcal{BG}(\theta)\)}{Vector $\bm{x}$ has i.i.d elements and each is a product of independent Bernoulli and standard normal random variables: $x_i =
	b_ig_i$, where $b_i \sim Ber(\theta)$ and $g_i  \sim \mathcal{N} (0, 1)$}
\renewcommand{\nomname}{Notations}
\printnomenclature[0.8in]
 }

%
%
%

\section{ Signal Model and Problem Formulation}\label{sec:sys}
In this section, we  introduce   the signal model and the proposed online ODL problem formulation.

\subsection{Signal Model}
In the online ODL, we consider that the data samples  arrive in streams. At time $t$, there is one mini-batch of samples  $\bm{Y}_t =[\bm{y}^1_t,\ldots,\bm{y}_t^{M_t}]\in\mathbb{R}^{N\times {M_t}}$ arriving at the processor, where $M_t$ is the mini-batch size at time $t$ and $\bm{y}^j_t$ is the $j$-th sample in the $t$-th mini-batch.  Each sample is assumed to be generated by
\begin{equation}\label{mod}
\bm{y}^j_t =\bm{D}^{true}\bm{x}^j_t, \forall j=1,\ldots,M_t,  \forall t,
\end{equation} 
where $\bm{D}^{true}$ is the orthogonal dictionary and $\bm{x}^j_t$ is a realization of the random variable $X$ drawn from some distribution that induces sparsity. The basic goal of the online ODL processor is to dynamically learn the dictionary  in an online manner without storing all the historical samples. That is to say, the processor updates  the dictionary $\bm{D}_t$ at time $t$ only according to the samples arriving at that time, i.e., $\bm{Y}_t$. 

 \subsection{$\ell_3$-Norm-Based Formulation}
For the online ODL scheme, the dynamic updating of the dictionary can be done by solving the generic Problem  (\ref{eq:DLonline}) in an online manner. In Problem (\ref{eq:DLonline}), the  sparsity-promoting function $Sp(\cdot)$ needs to be carefully designed since it determines the performance of the online ODL and the complexity of the algorithm. In this work, we use $Sp(\cdot)=-\Vert\cdot\Vert^3_3$, which results in the following optimization problem:
 \begin{equation}
 \label{eq:DLonlinel31}  
\begin{aligned}
\underset{\bm{D}\in \mathbb{O}(N,\mathbb{R})}{\text{minimize}}\quad
& F(\bm{D})=\mathbb{E}_{\bm{y}\sim P}[-\|\bm{D}^{\text T}\bm{y}\|^3_3].
\\
\end{aligned}
\end{equation}
 The choice of  $-\Vert\cdot\Vert^3_3$ is inspired by the recent result  that minimizing the negative $p$-th power of the $\ell_p$-norm ($p\in\mathbb{N},\quad p>2$) with the unit $\ell_2$-norm constraint leads to sparse (or spiky) solutions \cite{zhai2020complete,shen2020complete,9470930}. An illustration is given in Fig. \ref{fig:lpball}. 
  \begin{figure}
 	\centering
 	\includegraphics[width=0.65\linewidth]{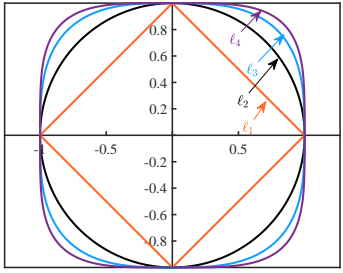}
 	\caption{Unit spheres of the of $\ell_p$ in $\mathbb{R}^2$, where $p = 1, 2, 3, 4$. The sparsest points on the unit $\ell_2$-norm-sphere, e.g., points $(0, 1),(0,-1),(1, 0)$ and $(-1,0)$ in $\mathbb{R}^2$, have the largest $\ell_p$-norm ($p\in\mathbb{N},\quad p>2$).}
 	\label{fig:lpball}
 \end{figure}
 Compared to the widely used $\ell_1$-norm,  the negative $\ell_p$-norm formulation allows a convex relaxation of the orthogonal constraint. Hence it provides flexibility for the algorithm design, as we will illustrate in Section \ref{sec:relaxprob}. Also, the differentiability of this formulation enables a faster convergence of algorithms. In this paper,  we choose $p=3$ since the sample complexity and the total computation complexity achieve the minimum  when $p=3$  among all the choices of $p \quad(p\in\mathbb{N}, p>2)$ for maximizing  the $\ell_p$-norm over the orthogonal group \cite{shen2020complete}.

 \subsection{Orthogonal Constraint Relaxation}\label{sec:relaxprob}
After determining the sparsity-promoting function, to facilitate an efficient online solver with a convergence guarantee,   we  propose a convex  relaxation of the orthogonal constraint in Problem (\ref{eq:DLonlinel31}) based on the following Lemma \ref{lem:conv}.
\begin{lemma}[\protect{\cite[3.4]{journee2010generalized}} Convex Hull of the Orthogonal Group]\label{lem:conv}
	 The convex hull of the orthogonal group is the (closed) unit spectral ball:
	 $$ conv(\mathbb{O}(N,\mathbb{R})) = \mathbb{B}_{sp}(N,\mathbb{R}),$$
	 where $\mathbb{B}_{sp}(N,\mathbb{R}):=\{\bm{X} \in\mathbb{R}^{N\times N}: \|\bm{X}\|\leq1\}$ is the unit spectral ball.
  \end{lemma}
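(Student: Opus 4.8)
The plan is to establish the set equality by proving the two inclusions separately, the second of which is the substantive one.

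For the inclusion $conv(\mathbb{O}(N,\mathbb{R})) \subseteq \mathbb{B}_{sp}(N,\mathbb{R})$, I would first observe that every orthogonal matrix $\bm{Q}$ satisfies $\|\bm{Q}\| = 1$, so $\mathbb{O}(N,\mathbb{R}) \subseteq \mathbb{B}_{sp}(N,\mathbb{R})$. Since the spectral norm is a norm, the ball $\mathbb{B}_{sp}(N,\mathbb{R})$ is a convex set, and a convex set contains the convex hull of any of its subsets; the inclusion follows immediately.

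The reverse inclusion $\mathbb{B}_{sp}(N,\mathbb{R}) \subseteq conv(\mathbb{O}(N,\mathbb{R}))$ is the main obstacle, and it requires explicitly exhibiting, for each $\bm{X}$ with $\|\bm{X}\| \leq 1$, a convex combination of orthogonal matrices equal to $\bm{X}$. My approach would be to reduce to a diagonal matrix via the singular value decomposition. Writing $\bm{X} = \bm{U}\bm{\Sigma}\bm{V}^{\text T}$ with $\bm{U},\bm{V} \in \mathbb{O}(N,\mathbb{R})$ and $\bm{\Sigma} = diag(\bm{\sigma})$, the assumption $\|\bm{X}\| \leq 1$ is exactly the statement that $\sigma_i \in [0,1]$ for all $i \in [N]$. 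The key structural fact I would exploit is that the orthogonal group is invariant under left and right multiplication by orthogonal matrices: if $\bm{\Sigma} = \sum_k \lambda_k \bm{O}_k$ is a convex combination with each $\bm{O}_k \in \mathbb{O}(N,\mathbb{R})$, then $\bm{X} = \sum_k \lambda_k (\bm{U}\bm{O}_k\bm{V}^{\text T})$ is again a convex combination of orthogonal matrices. It therefore suffices to represent the diagonal matrix $\bm{\Sigma}$.

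For the diagonal case, I would write each entry as the convex combination $\sigma_i = \frac{1+\sigma_i}{2}(+1) + \frac{1-\sigma_i}{2}(-1)$ of the two numbers $\pm 1$, and then pass to the associated product representation over all coordinates. Concretely, for each sign pattern $\bm{s} \in \{-1,+1\}^N$ the diagonal sign matrix $diag(\bm{s})$ is orthogonal, and setting $\lambda_{\bm{s}} = \prod_{i=1}^N \frac{1 + s_i \sigma_i}{2}$ produces nonnegative weights (since $|\sigma_i| \leq 1$) that sum to one, as seen by expanding the product $\prod_{i=1}^N ( \frac{1+\sigma_i}{2} + \frac{1-\sigma_i}{2} ) = 1$. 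A short coordinatewise check then shows $\sum_{\bm{s}} \lambda_{\bm{s}}\, diag(\bm{s}) = \bm{\Sigma}$, because for each fixed $j$ the weighted sum of $s_j$ factorizes across coordinates and collapses to $\sigma_j$. Combining this representation with the bi-invariance reduction above expresses $\bm{X}$ as a finite convex combination of orthogonal matrices, which completes the inclusion and hence the lemma. The only delicate point is the reduction through the singular value decomposition and the invariance argument; once the matrix is diagonalized, the remaining step is an elementary product-measure identity.
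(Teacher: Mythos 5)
Your proof is correct and complete. Both inclusions are sound: the easy direction follows, as you say, from $\|\bm{Q}\|=1$ for orthogonal $\bm{Q}$ together with the convexity of a norm ball, and your hard direction checks out in every detail. The SVD reduction is legitimate ($\|\bm{X}\|\leq 1$ is equivalent to $\sigma_i\in[0,1]$), the bi-invariance observation $\bm{U}\bm{O}_k\bm{V}^{\text T}\in\mathbb{O}(N,\mathbb{R})$ correctly transports a convex representation of $\bm{\Sigma}$ to one of $\bm{X}$, and your weights $\lambda_{\bm{s}}=\prod_{i=1}^N\frac{1+s_i\sigma_i}{2}$ are nonnegative, sum to one by expanding $\prod_i\bigl(\frac{1+\sigma_i}{2}+\frac{1-\sigma_i}{2}\bigr)=1$, and satisfy $\sum_{\bm{s}}\lambda_{\bm{s}}s_j=\sigma_j$ by the factorization you describe, so $\bm{\Sigma}=\sum_{\bm{s}}\lambda_{\bm{s}}\,diag(\bm{s})$ exactly as claimed.

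One point of comparison: the paper itself gives no proof of this lemma, importing it by citation from the reference \cite[3.4]{journee2010generalized}, so your argument is genuinely self-contained where the paper is not. The standard arguments in the literature take a less constructive route: either one identifies the extreme points of the compact convex set $\mathbb{B}_{sp}(N,\mathbb{R})$ as precisely the orthogonal matrices and invokes the Minkowski/Krein--Milman theorem, or one shows that $\mathbb{O}(N,\mathbb{R})$ and $\mathbb{B}_{sp}(N,\mathbb{R})$ have the same support function, namely the nuclear norm $\bm{G}\mapsto\|\bm{G}\|_*$, and concludes equality of the closed convex hulls by duality. The duality route is worth knowing here because it is exactly the mechanism behind Lemma 3 of this paper (the LMO computation): the linear minimization $\min_{\bm{S}\in\mathbb{B}_{sp}(N,\mathbb{R})}\langle\bm{G},\bm{S}\rangle=-\|-\bm{G}\|_*$ is attained at the orthogonal matrix $\bm{U}\bm{V}^{\text T}$, reflecting the fact that linear functionals over the ball are optimized on the orthogonal group. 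Your construction buys something the abstract arguments do not: an explicit finite convex combination (with $2^N$ terms), at the cost of being specific to this pair of sets, whereas the extreme-point and support-function arguments generalize directly, e.g., to the Stiefel-manifold version treated in the cited reference.
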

From Lemma \ref{lem:conv}, we know that the unit spectral ball is the  minimal convex set containing the orthogonal group. Then we propose the following relaxed problem for the online ODL:

 \begin{equation}
\label{eq:DLonlinel32}  
\boxed{\mathscr{P}:\underset{\bm{D}\in \mathbb{B}_{sp}(N,\mathbb{R})}{\text{minimize}}\quad
F(\bm{D})=\mathbb{E}_{\bm{y}\sim P}[-\|\bm{D}^{\text T}\bm{y}\|^3_3].}
\end{equation}

Problem $\mathscr{P}$ is a proper relaxation of  Problem (\ref{eq:DLonlinel31}) since all the optimal solutions of Problem (\ref{eq:DLonlinel31}) belong to the set of the optimal solutions of Problem $\mathscr{P}$ under some general  statistical model for  $\bm{y}$. We will show this relationship formally in the following.

  We first  introduce  the following definition of the {\em sign-permutation matrix} for characterizing the optimal solutions.
\begin{defn}
	\label{def:(Phase-permutation-Matrix)-The}(Sign-permutation Matrix)
	The $N$-dimensional sign-permutation matrix $\bm{\Xi}\in\mathbb{R}^{N\times N}$
	is defined as
	\begin{equation}
	\boldsymbol{\Xi}=\boldsymbol{\Sigma}\boldsymbol{\Pi},
	\end{equation}
	where $\boldsymbol{\Sigma}=diag(\pm \bm{1}_N)$
	with $\bm{1}_N$ the $N$-dimensional all-one vector, and $\boldsymbol{\Pi}=[\bm{e}_{\pi(1)},\bm{e}_{\pi(2)},\ldots,\bm{e}_{\pi(N)}]$,
	with $\bm{e}_{n}$ being a standard basis vector and $[\pi(1),\pi(2),\ldots,\pi(N)]$
	being any permutations of the $N$ elements.
\end{defn}
Next, the relationship between the  optimal solutions of  Problem (\ref{eq:DLonlinel31})  and  Problem $\mathscr{P}$  is formally presented in the following Theorem \ref{thm:relax}.
\begin{theorem}[Consistency of the Relaxation] \label{thm:relax}
	If $\bm{y}$ follows the distribution $P$ such that $\bm{y}=\bm{D}^{true}\bm{x}$ with $\bm{D}^{true}\in\mathbb{O}(N,\mathbb{R})$ and the entries of $\bm{x}$ being i.i.d  Bernoulli Gaussian,\footnote{The Bernoulli Gaussian is a typical statistical model for the sparse coefficient, and it is widely used for the analysis of dictionary learning \cite{sun2015complete1,bai2018subgradient,zhai2020complete}.} $\bm{x} \overset{i.i.d}{\sim}\mathcal{BG}(\theta)$, then the optimal solution of Problem (\ref{eq:DLonlinel31}) is 
	\begin{equation}\label{eq:opt}
	\bm{D}^{opt}=\bm{D}^{true}\boldsymbol{\Xi}^{\text T},
	\end{equation}
	which  is also an optimal solution of Problem $\mathscr{P}$.  In (\ref{eq:opt}), $\boldsymbol{\Xi}$ is a sign-permutation matrix.
\end{theorem}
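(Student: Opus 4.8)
The plan is to first characterize the minimizers of $F$ over the orthogonal group and then show that enlarging the feasible set to $\mathbb{B}_{sp}(N,\mathbb{R})$ does not change the optimal value. For the orthogonal Problem (\ref{eq:DLonlinel31}), I would introduce the change of variables $\bm{R}=\bm{D}^{\text T}\bm{D}^{true}$. Since $\bm{D}^{true}\in\mathbb{O}(N,\mathbb{R})$ is fixed, $\bm{D}\mapsto\bm{R}$ is a bijection of $\mathbb{O}(N,\mathbb{R})$ onto itself and $\bm{D}^{\text T}\bm{y}=\bm{R}\bm{x}$, so minimizing $F$ is equivalent to maximizing $g(\bm{R}):=\mathbb{E}_{\bm{x}}[\|\bm{R}\bm{x}\|_3^3]=\sum_{i=1}^N\mathbb{E}_{\bm{x}}[|\bm{r}_i^{\text T}\bm{x}|^3]$ over $\bm{R}\in\mathbb{O}(N,\mathbb{R})$, where $\bm{r}_i^{\text T}$ is the $i$-th row of $\bm{R}$. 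Conditioning on the Bernoulli support $S=\{j:b_j=1\}$, the quantity $\bm{r}_i^{\text T}\bm{x}$ is a zero-mean Gaussian with variance $\sum_{j\in S}R_{ij}^2$, which yields
\[
\mathbb{E}_{\bm{x}}[|\bm{r}_i^{\text T}\bm{x}|^3]=c_3\sum_{S\subseteq[N]}p_S\Big(\sum_{j\in S}R_{ij}^2\Big)^{3/2},
\]
with $c_3=\mathbb{E}_{g\sim\mathcal{N}(0,1)}[|g|^3]$ and $p_S=\theta^{|S|}(1-\theta)^{N-|S|}$.

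The crux is then a tight bound on the unit sphere. Since the rows are unit-norm, $\sum_{j\in S}R_{ij}^2\le1$ and therefore $(\sum_{j\in S}R_{ij}^2)^{3/2}\le\sum_{j\in S}R_{ij}^2$; combined with the marginal identity $\sum_{S\ni j}p_S=\theta$, this gives $\mathbb{E}_{\bm{x}}[|\bm{r}_i^{\text T}\bm{x}|^3]\le c_3\theta$. For $\theta\in(0,1)$ every $p_S$ is positive, so equality holds iff $\sum_{j\in S}R_{ij}^2\in\{0,1\}$ for all $S$; taking singletons $S=\{j\}$ forces each $R_{ij}^2\in\{0,1\}$, i.e. $\bm{r}_i=\pm\bm{e}_{k_i}$. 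Summing over $i$ gives $g(\bm{R})\le Nc_3\theta$, with equality iff every row is a signed standard basis vector; orthonormality of the rows then forces the indices $k_i$ to be distinct, so $\bm{R}$ is exactly a sign-permutation matrix $\bm{\Xi}$. Unwinding the substitution through $\bm{D}=\bm{D}^{true}\bm{R}^{\text T}$ yields precisely $\bm{D}^{opt}=\bm{D}^{true}\bm{\Xi}^{\text T}$.

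For the relaxed Problem $\mathscr{P}$, I would exploit concavity. Each map $\bm{D}\mapsto|(\bm{D}^{\text T}\bm{y})_i|^3=|\bm{y}^{\text T}\bm{D}_{:,i}|^3$ is convex (the convex function $|\cdot|^3$ composed with a linear map), so $\bm{D}\mapsto\mathbb{E}_{\bm{y}}[\|\bm{D}^{\text T}\bm{y}\|_3^3]$ is convex and $F$ is concave. A continuous concave function attains its minimum over the compact convex set $\mathbb{B}_{sp}(N,\mathbb{R})$ at an extreme point, and by Lemma \ref{lem:conv} the set of extreme points of the spectral ball is exactly $\mathbb{O}(N,\mathbb{R})$. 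Hence $\min_{\bm{D}\in\mathbb{B}_{sp}(N,\mathbb{R})}F(\bm{D})=\min_{\bm{D}\in\mathbb{O}(N,\mathbb{R})}F(\bm{D})=F(\bm{D}^{opt})$, and since $\bm{D}^{opt}\in\mathbb{O}(N,\mathbb{R})\subseteq\mathbb{B}_{sp}(N,\mathbb{R})$ attains this value, it is an optimal solution of $\mathscr{P}$.

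The main obstacle lies in the orthogonal case, specifically in establishing the strict form of the sphere bound so that the sign-permutation matrices are characterized \emph{exactly}, rather than merely exhibited as one family of maximizers among possibly others. The Bernoulli-Gaussian conditioning reduces this to the elementary inequality $t^{3/2}\le t$ on $[0,1]$ together with the marginal identity $\sum_{S\ni j}p_S=\theta$, which make both the bound and its equality condition transparent; the remaining delicate point is to argue that equality in the summed bound, combined with the orthonormality of the rows, pins down distinct support coordinates. By comparison, the relaxation step is routine once the concavity of $F$ and the extreme-point description of $\mathbb{B}_{sp}(N,\mathbb{R})$ implied by Lemma \ref{lem:conv} are in hand.
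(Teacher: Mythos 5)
Your proposal is correct. For the orthogonal part it takes essentially the paper's route: your substitution $\bm{R}=\bm{D}^{\text T}\bm{D}^{true}$ is the paper's $\bm{W}$, your conditioning on the Bernoulli support with the conditional Gaussian third moment $c_3\bigl(\sum_{j\in S}R_{ij}^2\bigr)^{3/2}$ is exactly the paper's rotation-invariance step yielding $\gamma_1\mathbb{E}[\|\bm{W}_{n,:}\odot\bm{b}^{\text T}\|_2^3]$ with $\gamma_1=2^{3/2}/\sqrt{\pi}$, and your bound via $t^{3/2}\le t$ on $[0,1]$ together with $\sum_{S\ni j}p_S=\theta$ is the paper's $\mathbb{E}[\|\cdot\|_2^3]\le\mathbb{E}[\|\cdot\|_2^2]=\theta$. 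One genuine improvement: where the paper delegates the equality case to a citation, you prove it explicitly (singleton supports force $R_{ij}^2\in\{0,1\}$, hence $\bm{r}_i=\pm\bm{e}_{k_i}$), noting correctly that this requires $\theta\in(0,1)$ so that every support has positive probability — an assumption the paper also uses implicitly; your distinctness argument via orthonormality of the rows matches the paper's trace computation. Where you genuinely diverge is the relaxation step. The paper simply observes that $\|\bm{W}_{n,:}\|_2=\|\bm{D}_{:,n}\|_2\le\|\bm{D}\|\le 1$ still holds on $\mathbb{B}_{sp}(N,\mathbb{R})$, so the identical lower bound $F(\bm{D})\ge -N\gamma_1\theta$ extends to the ball and is attained by the feasible point $\bm{D}^{opt}$. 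You instead argue structurally: $F$ is concave (each $|\bm{y}^{\text T}\bm{D}_{:,i}|^3$ is convex in $\bm{D}$), so its minimum over the compact convex ball is attained at an extreme point, and by Lemma \ref{lem:conv} the extreme points lie in $\mathbb{O}(N,\mathbb{R})$ — strictly, Lemma \ref{lem:conv} gives $conv(\mathbb{O}(N,\mathbb{R}))=\mathbb{B}_{sp}(N,\mathbb{R})$ and you need Milman's partial converse of Krein--Milman (valid since $\mathbb{O}(N,\mathbb{R})$ is compact) to conclude $\mathrm{ext}(\mathbb{B}_{sp}(N,\mathbb{R}))\subseteq\mathbb{O}(N,\mathbb{R})$, which is all your argument uses, so you should cite that rather than assert exact equality of the two sets. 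Your route buys generality: the Bauer-type extreme-point argument proves consistency of the relaxation for any concave sparsity objective (e.g., $-\|\cdot\|_4^4$ in Baseline 2) without redoing the probabilistic bound. The paper's route buys finer information: keeping the explicit bound over the ball lets one analyze the full minimizer set of Problem $\mathscr{P}$ itself (the full-rank discussion in the paper's Remark 1 and appendix), which your argument does not address — though the theorem as stated only requires that $\bm{D}^{opt}$ be \emph{an} optimizer of $\mathscr{P}$, so your proof fully suffices for the claim.
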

\begin{proof}
	See Appendix \ref{proof:relax}.
\end{proof}
\begin{remark}
This consistency of the relaxation no longer holds when $-\Vert\cdot\Vert^3_3$ in  Problem $\mathscr{P}$ is replaced by the widely used sparsity-promoting function $\Vert\cdot\Vert_1$. When $\Vert\cdot\Vert_1$ is used, the optimal solution over the orthogonal group is still  $\bm{D}^{true}\boldsymbol{\Xi}^{\text T}$, but the optimal solution over the unit spectral ball is $\bm{0}$. {  Hence, the relaxation no longer maintains the optimality of the minimizers of the original problem if $\Vert\cdot\Vert_1$ is used. The minimizers of the relaxed Problem $\mathscr{P}$ form a larger set compared to the minimizers of Problem (\ref{eq:DLonlinel31}). Hence, Problem $\mathscr{P}$  maintains the optimality of the minimizers of Problem (\ref{eq:DLonlinel31}), but the minimizers of Problem $\mathscr{P}$  are  not necessarily  the minimizers of Problem (\ref{eq:DLonlinel31}). However,  we can show that if the dictionary in  Problem $\mathscr{P}$ is further restricted to be fullrank, then the minimizers of  Problem $\mathscr{P}$ are also the minimizers of Problem (\ref{eq:DLonlinel31}) (see Appendix \ref{proof:relax}). Fortunately, the full-rank condition    holds,  as shown from the extensive experiments  in Section \ref{sec:exp}. We believe the probability of encountering a rank-deficient instance is very low when adopting  algorithms  with full-rank dictionary initialization and  randomness in updating  the dictionary, e.g., the proposed Algorithm \ref{alg:genFWODL}.}
\end{remark}
After the convex relaxation, we then focus on solving  Problem $\mathscr{P}$, which is a nonconvex optimization problem over a convex set. The convex set has a key property: it contains the convex combination of  any two points in the set. That is to say, if we have $\bm{A}, \bm{B} \in \mathbb{B}_{sp}(N,\mathbb{R})$, then 
\begin{equation}\label{eq:cvxcomb}
\eta\bm{A}+(1-\eta)\bm{B}\in\mathbb{B}_{sp}(N,\mathbb{R}), \eta\in(0,1).
\end{equation}
This property enables an efficient online algorithm with a convergence guarantee, as we will illustrate in the next section. 

\section{Online Nonconvex Frank-Wolfe-Based Algorithm}\label{sec:alg}
In this section, we first outline the  proposed Frank-Wolfe-based algorithm, NoncvxSFW, for  general online convex-constraint nonconvex problems, and then specialize it to solve Problem $\mathscr{P}$.

\subsection{NoncvxSFW for General Online Non-Convex Optimization}\label{sec:generalSFW}
To solve a general online convex-constraint nonconvex problem,
\begin{equation}\label{eq:genNoncvx}
\underset{\bm{X}\in  \underbrace{\mathcal{C}}_{\text{convex}}}{\text{min} } \quad
F_{gen}(\bm{X})=\mathbb{E}_{\bm{y}\sim P}[\underbrace{f(\bm{X},\bm{y})}_{\text{nonconvex in $\bm{X}$}}],
\end{equation}
we require the algorithm to have the following properties:
\begin{itemize}
	\item {\em Computational Efficiency}: Efficient per-iteration computation.
	\item {\em Theoretical Effectiveness}: Theoretical guarantee of convergence to a stationary point.
\end{itemize}
To fulfill the above properties, we propose  NoncvxSFW, as shown in  Algorithm \ref{alg:genFW}, which is a variant of the Stochastic Frank-Wolfe method (SFW) in \cite{mokhtari2020stochastic}. The SFW method and the
corresponding analysis  can only be applied to solve
convex problems. However,  NoncvxSFW and the analysis we
propose in this paper are also applicable to nonconvex problems. In the following, we will elaborate on how the proposed NoncvxSFW algorithm satisfies the required properties.
\begin{algorithm}\label{alg:genFW}
	\KwData{$\{\bm{Y}_t\}_{t=1}^\infty$ with $\bm{Y}_t=[\bm{y}^1_t,\ldots,\bm{y}_t^{M_t}]$}
	\KwResult{$\{\bm{X}_{t}\}_{t=1}^\infty$ }
	Initialization: { $\bm{G}_0=\bm{0}$ and random $\bm{X}_0 \in \mathcal{C}$ }\\
	\For{$ t = 1,2,\ldots$}{
		$\rho_t=4(t+1)^{-1/2}$, $\gamma_t=2(t+2)^{-3/4}$\\
		1. Gradient Approximation: $\bm{G}_t=(1-\rho_t)\bm{G}_{t-1}+\frac{\rho_t}{M_t}\sum_{j\in[M_t]}\nabla f(\bm{X}_{t-1},\bm{y}^j_t)$\\
		2. LMO:
		$\bm{S}_t = \arg\min_{\bm{S}\in \mathcal{C}}\langle \bm{G}_t, \bm{S} \rangle$\\
		3. Variable Update:
		$\bm{X}_{t} = \mathcal{P}[(1-\gamma_{t})\bm{X}_{t-1}+\gamma_{t}\bm{S}_t]$.
	}
	\caption{NoncvxSFW for General Nonconvex Problem}
\end{algorithm}

\subsubsection{Computational Efficiency of General Non-Convex Optimization}
Algorithm \ref{alg:genFW} comprises  three main steps. 
\begin{itemize}
	\item Step 1 ({\em Gradient Approximation}) approximates the true gradient $\nabla F_{gen}(\bm{X})$ with  $\bm{G}_t$ in a recursive way. In the calculation, $M_t$ can be fixed along all $t$. Hence, compared to the  methods in \cite{hazan2016variance} and \cite{reddi2016stochastic} that require  an increasing number of stochastic gradient evaluations as the number of iterations $t$ grows,  NoncvxSFW is more computationally efficient. 
	\item Step 2   ({\em LMO}) is a procedure to handle the constraint. It can be regarded as solving a linear approximation of the objective function over the constraint set $\mathcal{C}$ using the approximated gradient produced by Step 1. Compared to  the Quadratic Minimization Oracle (QMO) in the proximal-based methods, e.g., ProxSGD \cite{ghadimi2016mini}, the LMO can be more computationally efficient for many constraint sets, such as the trace norm and the $\ell_p$ balls \cite{jaggi2013revisiting}. 
	\item Step 3 ({\em Variable Update}) updates the variable $\bm{X}_{t}$ by  a simple convex combination of $\bm{S}_t\in\mathcal{C}$ and $\bm{X}_{t-1}\in\mathcal{C}$, $\mathcal{P}[\bm{X}_{t} = (1-\gamma_{t})\bm{X}_{t-1}+\gamma_{t}\bm{S}_t]$, where $\mathcal{P}[\bm{X}]$ is any operation that satisfies $\mathcal{P}[\bm{X}] \in \mathcal{C}$ and $F_{gen}(\mathcal{P}[\bm{X}])\leq F_{gen}(\bm{X})$. According to (\ref{eq:cvxcomb}), we have $\bm{X}_{t} = (1-\gamma_{t})\bm{X}_{t-1}+\gamma_{t}\bm{S}_t\in\mathcal{C}$. This step only requires  the output from Step 2 and the variable of  the last iteration and  automatically ensures the feasibility of the output $\bm{X}_t$.
\end{itemize}
	\subsubsection{Theoretical Effectiveness of General Non-Convex Optimization}

To show the convergence of the NoncvxSFW, we first introduce the following Frank-Wolfe gap as the measure for the first-order stationarity.
\begin{defn}[Frank-Wolfe Gap]
	The Frank-Wolfe gap at the $t$-th iteration, $g^{gen}_t$, is defined as
	\begin{equation}
	g^{gen}_{t}:=\underset{\bm{S}\in\mathcal{C}}{\max}\langle-\nabla F_{gen}(\bm{X}_{t-1}), \bm{S}-\bm{X}_{t-1}\rangle.
	\end{equation}
\end{defn}
The Frank-Wolfe gap is a valid first-order stationary measure because of the following Lemma \ref{lem:FWgap}.
\begin{lemma}[Frank-Wolfe Gap is a Measure for Stationarity]\label{lem:FWgap}
	A point $\bm{X}_{t-1}$	is a stationary point for the optimization problem (\ref{eq:genNoncvx}) if and only if $g^{gen}_t = 0$.
\end{lemma}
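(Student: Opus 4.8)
The plan is to prove the equivalence by unwinding the first-order stationarity condition for a nonconvex objective over a convex constraint set and comparing it directly against the definition of the Frank-Wolfe gap. First I would fix the relevant notion of stationarity: for Problem (\ref{eq:genNoncvx}), a feasible point $\bm{X}_{t-1}\in\mathcal{C}$ is (first-order) stationary precisely when no feasible direction is a descent direction, i.e. when the variational inequality
\[
\langle \nabla F_{gen}(\bm{X}_{t-1}), \bm{X} - \bm{X}_{t-1}\rangle \geq 0 \quad \text{for all } \bm{X}\in\mathcal{C}
\]
holds. This is the standard necessary condition for local optimality over a convex set, which I would justify by noting that, for any $\bm{X}\in\mathcal{C}$, the segment $\bm{X}_{t-1}+\eta(\bm{X}-\bm{X}_{t-1})$ remains in $\mathcal{C}$ for $\eta\in(0,1)$ by property (\ref{eq:cvxcomb}), and then differentiating $F_{gen}$ along this segment at $\eta=0$.

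Next I would record the elementary but crucial observation that the gap is always nonnegative. Since $\bm{X}_{t-1}\in\mathcal{C}$ is itself an admissible choice for $\bm{S}$ in the maximization defining $g^{gen}_t$, and that choice yields $\langle -\nabla F_{gen}(\bm{X}_{t-1}), \bm{X}_{t-1}-\bm{X}_{t-1}\rangle = 0$, the maximum is at least $0$; hence $g^{gen}_t\geq 0$, with equality exactly when the maximand never exceeds $0$ over $\mathcal{C}$.

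With these two facts in hand the equivalence is immediate. For the forward direction, if $\bm{X}_{t-1}$ is stationary then the variational inequality gives $\langle -\nabla F_{gen}(\bm{X}_{t-1}), \bm{S}-\bm{X}_{t-1}\rangle \leq 0$ for every $\bm{S}\in\mathcal{C}$, so $g^{gen}_t\leq 0$; combined with $g^{gen}_t\geq 0$ this forces $g^{gen}_t=0$. For the reverse direction, if $g^{gen}_t=0$ then the maximum over $\mathcal{C}$ is $0$, so $\langle -\nabla F_{gen}(\bm{X}_{t-1}), \bm{S}-\bm{X}_{t-1}\rangle \leq 0$ for every $\bm{S}\in\mathcal{C}$, which is exactly the stationarity variational inequality after multiplying by $-1$.

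I do not anticipate a genuinely hard step here; the only point requiring care is being explicit that \emph{stationary} for a constrained nonconvex problem should be read as the variational-inequality condition rather than $\nabla F_{gen}(\bm{X}_{t-1})=\bm{0}$ (the latter would generally fail, e.g. when the minimizer lies on the boundary of $\mathcal{C}$). Establishing that condition as the working definition, and invoking the convexity of $\mathcal{C}$ through (\ref{eq:cvxcomb}) to justify feasibility of the probing segment, constitutes the substantive content; the remaining inequalities are one-line consequences.
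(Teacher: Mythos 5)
Your proposal is correct and follows essentially the same route as the paper's proof in Appendix C: both take the variational inequality $\langle \nabla F_{gen}(\bm{X}_{t-1}), \bm{S}-\bm{X}_{t-1}\rangle \geq 0$ for all $\bm{S}\in\mathcal{C}$ as the definition of stationarity, note that $g^{gen}_t \geq 0$ always holds (since $\bm{S}=\bm{X}_{t-1}$ is admissible in the maximization), and conclude the equivalence by combining the two inequalities in each direction. Your added justification of the variational-inequality condition via differentiation along feasible segments is a nice touch the paper omits, but it does not change the substance of the argument.
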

\begin{proof}
	See Appendix \ref{proof:gap}.
\end{proof}
Then we assume the following conditions hold for the general problem (\ref{eq:genNoncvx}).
\begin{assum}\label{assum1}
	$\quad$
	\begin{enumerate}
		\item (Bounded Constraint Set)	The constraint set $\mathcal{C}$ is   bounded with diameter $diam(\mathcal{C})$ in terms of the Frobenius norm for matrices, i.e.,
		\begin{equation}
		\|\bm{X}-\bm{Y}\|_F \leq diam(\mathcal{C}),\quad \forall  \bm{X},\bm{Y}\in\mathcal{C}.
		\end{equation}
		\item 	(Lipschitz Smoothness) $ F_{gen}(\bm{X})$ is $L$-smooth over the  set $\mathcal{C}$, i.e.,
		\begin{equation}
		\begin{aligned}
			\|\nabla  F_{gen}(\bm{X})-\nabla  F_{gen}(\bm{Y})\|_F\leq L \|\bm{X}-\bm{Y}\|_F,\\ \forall  \bm{X},\bm{Y}\in\mathcal{C}.
		\end{aligned}
		\end{equation}
		\item  (Unbiased Mini-batch Gradient) The mini-batch gradient $\frac{1}{M_t}\sum_{j\in[M_t]}\nabla f(\bm{X}_{t-1},\bm{y}^j_t)$ is an unbiased estimation of the true gradient $\nabla  F_{gen}(\bm{X})$, i.e.,
		\begin{equation}
		\mathbb{E}\Big[\frac{1}{M_t}\sum_{j\in[M_t]}\nabla f(\bm{X}_{t-1},\bm{y}^j_t)\Big]=\nabla  F_{gen}(\bm{X}),\quad \forall t.
		\end{equation}
		\item 	(Bounded Variance of the Mini-batch Gradient) The variance of the mini-batch gradient $\frac{1}{M_t}\sum_{j\in[M_t]}\nabla f(\bm{X}_{t-1},\bm{y}^j_t)$ is bounded; i.e., for all $t$,  we have
		\begin{equation}
		\mathbb{E}\Big[\|\frac{1}{M_t}\sum_{j\in[M_t]}\nabla f(\bm{X}_{t-1},\bm{y}^j_t)-\nabla  F_{gen}(\bm{X})\|_F^2\Big]
		\leq\frac{V}{M_t}.	
		\end{equation}
	\end{enumerate}
\end{assum}
The above assumptions ensure  the convergence of  NoncvxSFW, which is formally stated in the following Theorem \ref{thm:convgen}.
\begin{theorem}[Convergence of  NoncvxSFW for the General Nonconvex Problem]\label{thm:convgen}
If the conditions in Assumption \ref{assum1} hold, using  Algorithm \ref{alg:genFW} to solve the general problem (\ref{eq:genNoncvx}), we have that the expected Frank-Wolfe gap converges to zero, in the sense that 
	\begin{equation}
	\begin{aligned}
    &\underset{1< s\leq t} {\inf} \mathbb{E}\Big[g^{gen}_s\Big]\\
	\leq & \frac{c_1 (\sqrt{\max\{C_0,C_1\}}diam(\mathcal{C})+L diam(\mathcal{C})^2)\ln(t+2)}{ (t+3)^{\frac{1}{4}}},
	\end{aligned}
	\end{equation}
	where $C_0 = \|\nabla  F_{gen}(\bm{X}_0)\|^2_F$, $C_1 = \frac{4V}{\underset{t}{\min}\{M_t\}}+2L^2  diam(\mathcal{C})^2$ and $c_1$ is some positive constant. 
	In other words, Algorithm \ref{alg:genFW} is guaranteed to converge to a stationary point of the general problem (\ref{eq:genNoncvx}) at a rate of $\mathcal{O}(\ln(t)/t^{1/4})$ in expectation. 
\end{theorem}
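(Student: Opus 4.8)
The plan is to combine a variance-reduced gradient-tracking estimate with the classical Frank--Wolfe descent argument, the latter adapted to the nonconvex regime through the Frank--Wolfe gap of Lemma \ref{lem:FWgap}. The central object is the gradient-estimation error $\bm{e}_t := \bm{G}_t-\nabla F_{gen}(\bm{X}_{t-1})$, and the first (and hardest) step is to establish the recursive bound
\[
\mathbb{E}\big[\|\bm{e}_t\|_F^2\big]\leq\frac{\max\{C_0,C_1\}}{(t+2)^{1/2}}.
\]
I would derive this from the gradient-approximation recursion in Step 1 of Algorithm \ref{alg:genFW}, writing $\bm{e}_t=(1-\rho_t)\bm{e}_{t-1}+(1-\rho_t)\big(\nabla F_{gen}(\bm{X}_{t-2})-\nabla F_{gen}(\bm{X}_{t-1})\big)+\rho_t\big(\hat{\bm{g}}_t-\nabla F_{gen}(\bm{X}_{t-1})\big)$, where $\hat{\bm{g}}_t=\frac{1}{M_t}\sum_{j\in[M_t]}\nabla f(\bm{X}_{t-1},\bm{y}^j_t)$. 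Taking the conditional expectation $\mathbb{E}_t[\cdot]$ kills the linear cross term carrying the fresh stochastic gradient by unbiasedness (Assumption \ref{assum1}.3); the drift factor $\nabla F_{gen}(\bm{X}_{t-2})-\nabla F_{gen}(\bm{X}_{t-1})$ is controlled through $L$-smoothness (Assumption \ref{assum1}.2) together with $\|\bm{X}_{t-1}-\bm{X}_{t-2}\|_F\leq\gamma_{t-1}\,diam(\mathcal{C})$; and the residual variance is bounded by $V/M_t$ (Assumption \ref{assum1}.4). After a Young-type split of $\|\bm{e}_{t-1}+\Delta\|_F^2$, this produces a recursion of the form $\mathbb{E}[\|\bm{e}_t\|_F^2]\leq(1-\rho_t)\mathbb{E}[\|\bm{e}_{t-1}\|_F^2]+\mathcal{O}(\rho_t^2+\gamma_{t-1}^2/\rho_t)$, and substituting $\rho_t=4(t+1)^{-1/2}$ and $\gamma_t=2(t+2)^{-3/4}$ (so that both additive terms are $\mathcal{O}(t^{-1})$) the claimed $(t+2)^{-1/2}$ decay closes by induction.

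The second step is the per-iteration descent inequality. Applying the descent lemma for $L$-smooth functions to the convex-combination iterate $(1-\gamma_t)\bm{X}_{t-1}+\gamma_t\bm{S}_t$ (legitimate because $\mathcal{P}[\cdot]$ keeps the point in $\mathcal{C}$ and only decreases $F_{gen}$) gives
\[
F_{gen}(\bm{X}_t)\leq F_{gen}(\bm{X}_{t-1})+\gamma_t\langle\nabla F_{gen}(\bm{X}_{t-1}),\bm{S}_t-\bm{X}_{t-1}\rangle+\tfrac{L\gamma_t^2}{2}diam(\mathcal{C})^2.
\]
The crux is to turn the inner product into the Frank--Wolfe gap plus an error. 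Letting $\bm{S}_t^\star$ minimize $\langle\nabla F_{gen}(\bm{X}_{t-1}),\cdot\rangle$ over $\mathcal{C}$ and using that $\bm{S}_t$ is optimal for the surrogate $\langle\bm{G}_t,\cdot\rangle$, substituting $\nabla F_{gen}(\bm{X}_{t-1})=\bm{G}_t-\bm{e}_t$ yields $\langle\nabla F_{gen}(\bm{X}_{t-1}),\bm{S}_t-\bm{X}_{t-1}\rangle\leq-g^{gen}_t+\langle\bm{e}_t,\bm{S}_t^\star-\bm{S}_t\rangle$, and the residual is bounded by $\|\bm{e}_t\|_F\,diam(\mathcal{C})$ via Cauchy--Schwarz and Assumption \ref{assum1}.1. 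Rearranging gives $\gamma_t\,g^{gen}_t\leq F_{gen}(\bm{X}_{t-1})-F_{gen}(\bm{X}_t)+\gamma_t\,diam(\mathcal{C})\,\|\bm{e}_t\|_F+\tfrac{L\gamma_t^2}{2}diam(\mathcal{C})^2$.

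Finally I would take total expectations, insert $\mathbb{E}[\|\bm{e}_t\|_F]\leq\sqrt{\mathbb{E}[\|\bm{e}_t\|_F^2]}\leq\sqrt{\max\{C_0,C_1\}}\,(t+2)^{-1/4}$ by Jensen, and sum over $s$ from $2$ to $t$. The objective differences telescope and stay bounded because $F_{gen}$ is bounded below on the compact set $\mathcal{C}$; the term $\sum_s\gamma_s\,diam(\mathcal{C})\sqrt{\max\{C_0,C_1\}}\,(s+2)^{-1/4}$ behaves like $\sum_s(s+2)^{-1}$ and is precisely what manufactures the $\ln(t+2)$ factor; while $\sum_s\tfrac{L\gamma_s^2}{2}diam(\mathcal{C})^2$ converges since $\gamma_s^2\sim(s+2)^{-3/2}$. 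Lower-bounding $\sum_{s=2}^t\gamma_s\gtrsim(t+3)^{1/4}$, replacing $\sum_s\gamma_s\mathbb{E}[g^{gen}_s]$ by $(\inf_{1<s\leq t}\mathbb{E}[g^{gen}_s])\sum_s\gamma_s$, and dividing through then yields the stated $\mathcal{O}(\ln(t)/t^{1/4})$ rate with the explicit constants. The main obstacle is unquestionably the gradient-tracking bound of the first step: calibrating the momentum parameter $\rho_t$ against the variable drift induced by $\gamma_t$ so that the induction closes at exactly the $(t+2)^{-1/2}$ rate is the delicate estimate on which the whole $t^{1/4}$ exponent rests.
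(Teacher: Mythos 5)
Your proposal is correct and follows essentially the same route as the paper's proof: your gradient-tracking recursion for $\bm{e}_t$ is exactly the paper's diminishing-error Lemma \ref{lem:deminisherror} (which obtains the contraction factor $1-\rho_t/2$ after the Young split and closes the induction by citing \cite[Lemma 17]{pmlr-v84-mokhtari18a}), and your per-iteration descent bound coincides with the paper's Iterates Contraction Lemma \ref{lem:uppergap}. The concluding steps---Jensen's inequality, telescoping against $C_*=\max F_{gen}-\min F_{gen}$, the $\sum_s (s+2)^{-1}$ term generating $\ln(t+2)$, and the lower bound $\sum_s \gamma_s \gtrsim (t+3)^{1/4}$---match the paper's final computation step for step.
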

\begin{proof}
	See Appendix \ref{proof:conv1}.
\end{proof}

\subsection{NoncvxSFW for the Proposed Online ODL Problem}
In this section, we will apply the proposed NoncvxSFW to solve the proposed online ODL Problem $\mathscr{P}$. The algorithm is summarized in Algorithm \ref{alg:genFWODL}. 
\begin{algorithm}\label{alg:genFWODL}
	\KwData{$\{\bm{Y}_t\}_{t=1}^\infty$ with $\bm{Y}_t=[\bm{y}^1_t,\ldots,\bm{y}_t^{M_t}]$}
	\KwResult{$\{\bm{D}_{t}\}_{t=1}^\infty$ }
	Initialization: { $\bm{G}_0=\bm{0}$ and {random $\bm{D}_0 \in \mathbb{O}(N,\mathbb{R})\subset \mathbb{B}_{sp}(N,\mathbb{R})$}}\\
	\For{$ t = 1,2,\ldots$}{
		$\rho_t=4(t+1)^{-1/2}$, $\gamma_t=2(t+2)^{-3/4}$\\
		1. Gradient Approximation: $\bm{G}_t=(1-\rho_t)\bm{G}_{t-1}+\frac{\rho_t}{M_t}\sum_{j\in[M_t]}-\nabla \|\bm{D}_{t-1}^{\text T}\bm{y}^j_t\|^3_3$\\
		2. LMO:
		$\bm{U},\bm{\Sigma},\bm{V}^{\text T}=\text{SVD}(-\bm{G}_t)$\\
		$\quad\quad\quad\quad \bm{S}_t = \bm{U}\bm{V}^{\text T}$\\
		3. Variable Update:
		$\bm{D}_{t} = Polar((1-\gamma_{t})\bm{D}_{t-1}+\gamma_{t}\bm{S}_t)$.
	}
	\caption{NoncvxSFW for the proposed Online ODL problem}
\end{algorithm}
 Similar to Section \ref{sec:generalSFW}, we will illustrate the NoncvxSFW for the proposed Online ODL problem  from the computational and theoretical aspects.
\subsubsection{Computational Efficiency of the Proposed Online ODL Problem}
{  When adopting  NoncvxSFW for Problem $\mathscr{P}$, we can obtain a computationally efficient ODL algorithm whose complexity will not increase as $t$ grows. 
	
	 Step 1 ({\em Gradient Approximation})  remains unchanged in Algorithm \ref{alg:genFW}. The sampled gradient  can be expressed as $-\nabla \|\bm{D}^{ T}\bm{y}^j_t\|^3_3= -\bm{y}_t^j(|(\bm {D}^{(t-1)})^{\text T}\bm{y}_t^j|\odot (\bm {D}^{(t-1)})^{\text T}\bm{y}_t^j)^{\text T}$. Hence, at each iteration,  the time complexity and  memory complexity of Step 1  are  $\mathcal{O}(N^2M_t)$ ( $M_t$ can be fixed along all $t$) and $\mathcal{O}(N^2)$, respectively.

 Step 2 ({\em LMO})  is calculated based on the following Lemma \ref{lem:LMO}.
\begin{lemma}[LMO for the Unit Spectral Ball]\label{lem:LMO}
The minimum value  of $\langle \bm{G}, \bm{S} \rangle,  \forall \bm{S}\in \mathbb{B}_{sp}(N,\mathbb{R})$ is the nuclear norm of $-\bm{G}$, i.e.,
\begin{equation}
\min_{\bm{S}\in \mathbb{B}_{sp}(N,\mathbb{R})}\langle \bm{G}, \bm{S} \rangle = \|-\bm{G}\|_*.
\end{equation}
The minimum is achieved when $\bm{S}$ belongs to the subdifferential of $\|-\bm{G}\|_*$, i.e.,
\begin{equation}\label{LMOS}
\begin{aligned}
\bm{S}^* =  &\arg\min_{\bm{S}\in \mathbb{B}_{sp}(N,\mathbb{R})}\langle \bm{G}, \bm{S} \rangle = \bm{U}\bm{V}^{\text T} \in \partial\|-\bm{G}\|_*,
\end{aligned}
\end{equation}
where 	$\bm{U},\bm{\Sigma},\bm{V}^{\text T}=\text{SVD}(-\bm{G})$.
\end{lemma}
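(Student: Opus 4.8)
The plan is to read this off from the fact that the nuclear norm $\|\cdot\|_*$ and the spectral norm $\|\cdot\|$ form a dual pair, so that the linear minimization over the spectral ball is governed by the variational characterization of the nuclear norm. Working with the Frobenius inner product $\langle \bm{X},\bm{Y}\rangle = Tr(\bm{X}^{\text T}\bm{Y})$, I would first rewrite the minimization of $\langle \bm{G},\bm{S}\rangle$ over $\mathbb{B}_{sp}(N,\mathbb{R})$ as the maximization of $\langle -\bm{G},\bm{S}\rangle$ over the same set, so that the nuclear norm of $-\bm{G}$ emerges as the governing quantity. Everything then reduces to showing $\max_{\|\bm{S}\|\leq 1}\langle -\bm{G},\bm{S}\rangle = \|-\bm{G}\|_*$ and to identifying the maximizer, which is the minimizer of the original problem and the point named in the LMO step of Algorithm \ref{alg:genFWODL}.

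For the upper bound I would invoke von Neumann's trace inequality. Writing the SVD $-\bm{G} = \bm{U}\bm{\Sigma}\bm{V}^{\text T}$ with ordered singular values $\sigma_1 \geq \cdots \geq \sigma_N \geq 0$, any feasible $\bm{S}$ with $\|\bm{S}\| \leq 1$ satisfies $\langle -\bm{G},\bm{S}\rangle = Tr((-\bm{G})^{\text T}\bm{S}) \leq \sum_i \sigma_i(-\bm{G})\,\sigma_i(\bm{S}) \leq \sum_i \sigma_i(-\bm{G}) = \|-\bm{G}\|_*$, where the first step is von Neumann's inequality and the second uses $\sigma_i(\bm{S}) \leq \|\bm{S}\| \leq 1$. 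This bounds $\langle -\bm{G},\bm{S}\rangle$ from above by the nuclear norm of $-\bm{G}$ on the entire feasible set.

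Next I would verify attainment at the proposed point. Taking $\bm{S}^* = \bm{U}\bm{V}^{\text T}$, the orthogonal invariance of the spectral norm gives $\|\bm{S}^*\| = 1$, so $\bm{S}^*$ is feasible, and a direct trace computation yields $\langle -\bm{G},\bm{S}^*\rangle = Tr(\bm{V}\bm{\Sigma}\bm{U}^{\text T}\bm{U}\bm{V}^{\text T}) = Tr(\bm{\Sigma}) = \|-\bm{G}\|_*$. Hence $\bm{S}^*$ attains the upper bound and is a maximizer of $\langle -\bm{G},\cdot\rangle$, equivalently a minimizer of $\langle \bm{G},\cdot\rangle$ over the ball. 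To obtain the subdifferential membership $\bm{S}^* \in \partial\|-\bm{G}\|_*$ in (\ref{LMOS}), I would use the general fact that for any norm the maximizers of $\langle \bm{A},\cdot\rangle$ over the dual unit ball are exactly the subgradients of that norm at $\bm{A}$; applying this with $\bm{A} = -\bm{G}$ and the spectral/nuclear duality delivers the claim.

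The main obstacle I anticipate is the equality case of von Neumann's inequality rather than the inequality itself. Establishing tightness cleanly is what motivates the explicit construction $\bm{S}^* = \bm{U}\bm{V}^{\text T}$: the general equality conditions (that $\bm{S}$ share a simultaneous singular-vector system with $-\bm{G}$) become delicate when $-\bm{G}$ has repeated or vanishing singular values, a regime in which the maximizer is no longer unique and $\partial\|-\bm{G}\|_*$ is strictly larger than the singleton $\{\bm{U}\bm{V}^{\text T}\}$. By exhibiting $\bm{S}^*$ directly I would pin down the optimal value, one concrete minimizer, and the subdifferential membership simultaneously, sidestepping that case analysis.
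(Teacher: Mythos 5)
Your proof is correct and takes essentially the same route as the paper, whose proof simply cites the dual-norm characterization of the nuclear norm and the subdifferential-of-a-norm identity that you make explicit via von Neumann's trace inequality plus the direct verification at $\bm{S}^*=\bm{U}\bm{V}^{\text T}$. One remark: your own computation shows the optimal value is $\langle\bm{G},\bm{S}^*\rangle=-Tr(\bm{\Sigma})=-\|-\bm{G}\|_*$, so the sign in the lemma's first display is a typo in the paper, while your identification of the minimizer and the membership $\bm{S}^*\in\partial\|-\bm{G}\|_*$ agree with the statement and with the LMO step of Algorithm \ref{alg:genFWODL}.
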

\begin{proof}
We can prove Lemma 3 by simply using the definition of the dual norm and the subdifferential of the norm.\footnote{ The detailed deduction can be found in \url{https://stephentu.github.io/blog/convex-analysis/2014/10/01/subdifferential-of-a-norm.html}.} 
\end{proof} Using the LMO to deal with the constraint is much more computationally friendly than  using the QMO  in the proximal-based method. In the QMO, the proximal operator for the matrix spectral norm requires a proximal operator for the $\ell_{\infty}$-norm of the singular vector, which has no closed-form  solution \cite{duchi2008efficient}. The calculation of  the LMO can be simplified via the fact that $-\bm{G}=\bm{U}\bm{\Sigma}\bm{V}^{\text T}=\bm{U}\bm{V}^{\text T}\bm{V}\bm{\Sigma}\bm{V}^{\text T} =\bm{S}^*\bm{V}\bm{\Sigma}\bm{V}^{\text T}$. This indicates that $\bm{S}^*$ can be calculated directly from the  polar decomposition of $-\bm{G}$, which has many efficient calculations \cite{higham1994parallel}.  Using  Coppersmith-Winograd matrix multiplication \cite{spatial} in the calculation of polar decomposition,  the time complexity and  memory complexity of Step 2  are $\mathcal{O}(N^{2.38})$  and $\mathcal{O}(N^2)$, respectively.

In Step 3 ({\em Variable Update}), we further adopt  polar decomposition,  inspired by its projection property \cite[Proposition 3.4]{absil2012projection}, which has  $\mathcal{O}(N^{2.38})$  time complexity and  $\mathcal{O}(N^2)$ memory complexity.} The following lemma shows that this is a valid specification of  $\mathcal{P}$ in Algorithm \ref{alg:genFW}.
\begin{lemma}(Validation of Variable Update Step)\label{lem:polar}
Let $\bm{X}\in \mathbb{B}_{sp}(N,\mathbb{R})$. Then we have
$Polar(\bm{X})\in \mathbb{B}_{sp}(N,\mathbb{R})$ and $F(Polar(\bm{X}))\leq F(\bm{X})$.
\end{lemma}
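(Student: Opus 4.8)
The first claim is immediate: writing the singular value decomposition $\bm{X}=\bm{U}\bm{\Sigma}\bm{V}^{\text T}$, the polar factor $Polar(\bm{X})=\bm{U}\bm{V}^{\text T}$ is orthogonal, so all its singular values equal $1$ and $\|Polar(\bm{X})\|=1\leq 1$; hence $Polar(\bm{X})\in\mathbb{O}(N,\mathbb{R})\subset\mathbb{B}_{sp}(N,\mathbb{R})$, consistent with Lemma \ref{lem:conv}. The substance is the descent inequality $F(Polar(\bm{X}))\leq F(\bm{X})$, equivalently $\mathbb{E}_{\bm{y}}[\|\bm{X}^{\text T}\bm{y}\|^3_3]\leq\mathbb{E}_{\bm{y}}[\|Polar(\bm{X})^{\text T}\bm{y}\|^3_3]$.

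The plan is to reduce this to a statement about shrinking the singular values. From the SVD, $\bm{X}^{\text T}\bm{y}=\bm{V}\bm{\Sigma}\bm{U}^{\text T}\bm{y}$ and $Polar(\bm{X})^{\text T}\bm{y}=\bm{V}\bm{U}^{\text T}\bm{y}$, so $\bm{X}^{\text T}\bm{y}=\bm{P}\,Polar(\bm{X})^{\text T}\bm{y}$ with $\bm{P}=\bm{V}\bm{\Sigma}\bm{V}^{\text T}\succeq 0$ and $\|\bm{P}\|\leq 1$ (the singular values satisfy $\sigma_i\leq 1$ because $\bm{X}\in\mathbb{B}_{sp}(N,\mathbb{R})$). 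The goal then becomes showing that applying the contraction $\bm{P}$ does not increase the expected $\ell_3$-energy. To exploit the model, I would substitute $\bm{y}=\bm{D}^{true}\bm{x}$ and condition on the Bernoulli support $S=\{k:b_k=1\}$ of $\bm{x}\overset{i.i.d}{\sim}\mathcal{BG}(\theta)$: given $S$ each coordinate of $\bm{D}^{\text T}\bm{y}$ is Gaussian, so $\mathbb{E}[\|\bm{D}^{\text T}\bm{y}\|^3_3]=c_3\sum_i\mathbb{E}_S[(\sum_{k\in S}A_{ik}^2)^{3/2}]$ with $\bm{A}=\bm{D}^{\text T}\bm{D}^{true}$ and $c_3=\mathbb{E}[|g|^3]$, $g\sim\mathcal{N}(0,1)$.

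For $\bm{D}=Polar(\bm{X})$ the matrix $\bm{A}'=Polar(\bm{X})^{\text T}\bm{D}^{true}$ is orthogonal, so the partial energies $\sum_{k\in S}(A'_{ik})^2$ are the diagonal entries of a rank-$|S|$ orthogonal projector, while for $\bm{D}=\bm{X}$ one has $\bm{A}=\bm{P}\bm{A}'$, so the corresponding Gram diagonals are contracted in Loewner order, $\bm{A}^{\text T}\bm{A}\preceq(\bm{A}')^{\text T}\bm{A}'=\bm{I}$. I would then finish using the homogeneity and convexity of $t\mapsto t^{3/2}$ together with the fact that every column of $\bm{X}$ has $\ell_2$-norm $\leq\|\bm{X}\|\leq 1$ whereas every column of $Polar(\bm{X})$ has unit norm; in the fully-supported (Gaussian) regime this collapses cleanly to $\sum_i\|\bm{X}_{:,i}\|^3_2\leq N=\sum_i\|Polar(\bm{X})_{:,i}\|^3_2$, which holds because each summand on the left is $\leq 1$.

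The hard part is that the descent is genuinely a statement in expectation and not pointwise: for a fixed $\bm{y}$ the map $\bm{w}\mapsto\|\bm{P}\bm{w}\|^3_3$ can exceed $\|\bm{w}\|^3_3$ (an off-diagonal PSD contraction can make a vector spikier in the $\ell_3$ sense), so one cannot argue sample-by-sample. The crux is therefore to show that averaging over the code restores monotonicity. The natural tool is the convexity of $\|\cdot\|^3_3$ combined with the invariance of $\bm{x}$ under coordinate sign flips (a contraction-principle / symmetrization argument), but the difficulty is that the singular-value shrinkage $\bm{\Sigma}$ is diagonal only in the rotated basis $\bm{V}$ — equivalently, it acts on $\bm{W}\bm{x}$ with $\bm{W}=\bm{U}^{\text T}\bm{D}^{true}$ orthogonal — which is misaligned with the independent coordinates of $\bm{x}$ that carry the sign symmetry. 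Reconciling these two bases, most plausibly by conditioning on the Bernoulli support and invoking a Schur-type inequality for the convex functional $\bm{p}\mapsto\mathbb{E}_S[(\sum_{k\in S}p_k)^{3/2}]$ evaluated on the Gram diagonals, is where the real work lies.
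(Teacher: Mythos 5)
Your setup is the paper's setup, down to the notation: the first claim is handled exactly as in Appendix~\ref{proof:polar} (the polar factor is orthogonal, hence in $\mathbb{B}_{sp}(N,\mathbb{R})$ by Lemma~\ref{lem:conv}), and your conditioning on the Bernoulli support plus Gaussian rotation invariance is precisely the computation in Eq.~(\ref{eq:maxF}): with $\bm{W}=\bm{X}^{\text T}\bm{D}^{true}$ and $\bm{W}^{O}=Polar(\bm{X})^{\text T}\bm{D}^{true}$, the lemma reduces to $\sum_{n}\mathbb{E}_{\bm{b}}\big[\|\bm{W}_{n,:}\odot\bm{b}^{\text T}\|_{2}^{3}\big]\leq\sum_{n}\mathbb{E}_{\bm{b}}\big[\|\bm{W}^{O}_{n,:}\odot\bm{b}^{\text T}\|_{2}^{3}\big]$, where your $\bm{A},\bm{A}'$ are the paper's $\bm{W},\bm{W}^{O}$ and your $\bm{P}=\bm{V}\bm{\Sigma}\bm{V}^{\text T}$ encodes $\bm{W}=\bm{P}\bm{W}^{O}$. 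But you never prove this inequality: the ``Schur-type inequality for the convex functional $\bm{p}\mapsto\mathbb{E}_{S}[(\sum_{k\in S}p_{k})^{3/2}]$ evaluated on the Gram diagonals'' is announced as the crux and then left open, and neither your full-support identity $\sum_{i}\|\bm{X}_{:,i}\|_{2}^{3}\leq N$ nor the Loewner contraction $\bm{W}^{\text T}\bm{W}\preceq\bm{I}$ implies the general-$\theta$ statement. A proof whose central inequality is flagged as ``where the real work lies'' is, by its own admission, incomplete; that is the genuine gap.

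That said, your diagnosis of \emph{why} this step is hard is accurate, and in fact more careful than the published argument. The paper's proof of Lemma~\ref{lem:polar} dispatches the step in one sentence — $\bm{W}^{O}$ has orthonormal rows, rows of $\bm{W}$ lie in the unit ball, ``using similar calculations as in Eq.~(\ref{eq:maxF})'' — but a comparison based on row norms alone cannot be the whole story: a spiky admissible row such as $0.99\,\bm{e}_{1}^{\text T}$ gives masked third moment $0.99^{3}\theta\approx0.97\theta$, while a flat orthonormal row $N^{-1/2}\bm{1}^{\text T}$ gives roughly $\theta^{3/2}$, which is far smaller for small $\theta$; so if the per-row expectations are to be compared, the proof must exploit the coupling $\bm{W}=\bm{P}\bm{W}^{O}$ (e.g., $\|\bm{W}_{n,:}\|_{2}^{2}=(\bm{P}^{2})_{nn}\leq P_{nn}=\langle\bm{W}_{n,:},\bm{W}^{O}_{n,:}\rangle$), not just the norm bounds. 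Likewise your observation that the descent fails pointwise is correct — for a fixed support the summed $3/2$-power comparison can be strictly violated by a rank-one PSD contraction, so the averaging over the Bernoulli supports is genuinely load-bearing. In short: your route is the same as the paper's, your easy steps are right, but you stop exactly at the inequality that carries the lemma — and since the paper itself compresses that inequality into an appeal to Eq.~(\ref{eq:maxF}), you cannot claim the lemma proved until you (or the paper) supply the support-averaged comparison in full.
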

\begin{proof}
	See Appendix \ref{proof:polar}.
\end{proof}
\subsubsection{Theoretical Effectiveness of the Proposed Online ODL Problem}
In this subsection, we will adapt the convergence theory in Theorem \ref{thm:convgen} to the proposed online ODL problem. We first show in the following Lemma \ref{lem:check} that the conditions in  Assumption \ref{assum1} are satisfied by Problem $\mathscr{P}$. 

\begin{lemma}[The ODL Problem Satisfies the Convergence Condition]\label{lem:check}
	If $\bm{y}$ follows the distribution $P$ such that,  for all $t$ and all $j$, $\bm{y}^j_t=\bm{D}^{true}\bm{x}^j_t$, with $\bm{D}^{true}\in\mathbb{O}(N,\mathbb{R})$ and the entries of $\bm{x}^j_t$ being i.i.d  Bernoulli Gaussian,  $\bm{x}^j_t \overset{i.i.d}{\sim}\mathcal{BG}(\theta)$, then Problem $\mathscr{P}$ satisfies the conditions in Assumption \ref{assum1}. Specifically, we have:
	\begin{enumerate}
		\item (Bounded Constraint Set)	
		\begin{equation}
		\begin{aligned}
			&\|\bm{D}_1-\bm{D}_2\|_F \leq \sqrt{2N}, \forall  \bm{D}_1,\bm{D}_2\in\mathbb{B}_{sp}(N,\mathbb{R}).
		\end{aligned}
		\end{equation}
		\item 	(Lipschitz Smoothness) $F(\bm{D})$ is $L$-smooth over the  set $\mathbb{B}_{sp}(N,\mathbb{R})$, i.e.,
	\begin{equation}
	\begin{aligned}
	&\|\nabla F(\bm{D}_1)-\nabla F(\bm{D}_2)\|_F\\
	\leq& \sqrt{\frac{2}{\pi}}N^{3/2}(N+1)\theta \|\bm{D}_1-\bm{D}_2\|_F,\\
	&\forall  \bm{D}_1,\bm{D}_2\in\mathbb{B}_{sp}(N,\mathbb{R}).
	\end{aligned}
	\end{equation}
	\item  (Unbiased Mini-batch Gradient) The mini-batch gradient $\frac{1}{M_t}\sum_{j\in[M_t]}-\nabla \|\bm{D}^{\text T}\bm{y}^j_t\|^3_3$ is an unbiased estimation of the true gradient $\nabla F(\bm{D})$, i.e.,
	\begin{equation}
	\mathbb{E}\Big[\frac{1}{M_t}\sum_{j\in[M_t]}-\nabla \|\bm{D}^{\text T}\bm{y}^j_t\|^3_3\Big]=\nabla F(\bm{D}),\quad \forall t.
	\end{equation}
	\item 	(Bounded Variance of the Mini-batch Gradient) The variance of the mini-batch gradient $\frac{1}{M_t}\sum_{j\in[M_t]}-\nabla \|\bm{D}^{\text T}\bm{y}^j_t\|^3_3$ is bounded; i.e., for all $t$, we have
	\begin{equation}
	\mathbb{E}\Big[\|\frac{1}{M_t}\sum_{j\in[M_t]}-\nabla \|\bm{D}^{\text T}\bm{y}^j_t\|^3_3-\nabla F(\bm{D})\|_F^2\Big]\leq\frac{3\theta N^2}{M_t}.
	\end{equation}
	\end{enumerate}
\end{lemma}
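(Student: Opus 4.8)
The plan is to verify the four items of Assumption~\ref{assum1} one by one, reducing every quantity to moments of the i.i.d.\ Bernoulli--Gaussian vector $\bm{x}$. The key reductions are that the single-sample gradient has the closed form $\nabla\|\bm{D}^{\text T}\bm{y}\|_3^3 = 3\,\bm{y}\,(|\bm{D}^{\text T}\bm{y}|\odot\bm{D}^{\text T}\bm{y})^{\text T}$, and that because $\bm{D}^{true}$ is orthogonal we have $\|\bm{y}\|=\|\bm{x}\|$ and $\bm{D}^{\text T}\bm{y}=(\bm{D}^{\text T}\bm{D}^{true})\bm{x}$, so every expectation becomes a moment of a weighted sum of independent $\mathcal{BG}(\theta)$ coordinates, for which $\mathbb{E}[x_k^2]=\theta$, $\mathbb{E}[x_k^4]=3\theta$, $\mathbb{E}[x_k^6]=15\theta$ and $\mathbb{E}[|x_k|]=\theta\sqrt{2/\pi}$. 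Item~1 is immediate: any $\bm{D}\in\mathbb{B}_{sp}(N,\mathbb{R})$ has all singular values at most $1$, hence $\|\bm{D}\|_F^2=\sum_i\sigma_i^2\le N$; combining the Frobenius-norm bounds of $\bm{D}_1$ and $\bm{D}_2$ yields the stated $\mathcal{O}(\sqrt N)$ diameter.

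For Item~3 (unbiasedness), since the $\bm{y}_t^j$ are i.i.d.\ from $P$ it suffices to show $\nabla\mathbb{E}_{\bm{y}}[-\|\bm{D}^{\text T}\bm{y}\|_3^3]=\mathbb{E}_{\bm{y}}[-\nabla\|\bm{D}^{\text T}\bm{y}\|_3^3]$, i.e.\ that differentiation and expectation commute. I would justify this with the dominated-convergence (Leibniz) rule: the integrand $\bm{D}\mapsto\|\bm{D}^{\text T}\bm{y}\|_3^3$ is continuously differentiable because $z\mapsto|z|^3$ is $C^1$, and its gradient is dominated uniformly over $\mathbb{B}_{sp}(N,\mathbb{R})$ by $3\|\bm{y}\|\,\|\bm{D}^{\text T}\bm{y}\|^2\le 3\|\bm{y}\|^3$, which is integrable since $\bm{x}$ (hence $\bm{y}$) has finite moments of all orders. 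For Item~4 (bounded variance), independence of the mini-batch gives $\mathbb{E}\big[\|\tfrac1{M_t}\sum_j\bm{g}_j-\nabla F(\bm{D})\|_F^2\big]=\tfrac1{M_t}\mathrm{Var}(\bm{g}_1)\le\tfrac1{M_t}\mathbb{E}\|\bm{g}_1\|_F^2$, where $\bm{g}_1=-\nabla\|\bm{D}^{\text T}\bm{y}\|_3^3$. Writing $\mathbb{E}\|\bm{g}_1\|_F^2=\mathbb{E}\big[\|\bm{y}\|^2\,\|\bm{D}^{\text T}\bm{y}\|_4^4\big]$ up to the multiplicative constant and expanding into a sum of sixth-order BG moments (the diagonal terms contributing $\mathbb{E}[x^6]=15\theta$ and the cross terms $\theta^2$) reduces the bound to elementary counting, and retaining the dominant $N^2$ contribution gives the stated $3\theta N^2/M_t$.

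The main obstacle is Item~2 (Lipschitz smoothness). My plan is to avoid the kink of $z\mapsto|z|^3$ by working with the first derivative $h(z):=z|z|$, which is $C^1$ with $h'(z)=2|z|$ and satisfies the elementary bound $|h(a)-h(b)|\le 2\max(|a|,|b|)\,|a-b|$. Applying this coordinatewise to $\bm{D}_1^{\text T}\bm{y}$ and $\bm{D}_2^{\text T}\bm{y}$, together with $\max_k|(\bm{D}_i^{\text T}\bm{y})_k|\le\|\bm{D}_i^{\text T}\bm{y}\|\le\|\bm{y}\|$ (using $\|\bm{D}_i\|\le1$), bounds $\|h(\bm{D}_1^{\text T}\bm{y})-h(\bm{D}_2^{\text T}\bm{y})\|$ by $2\|\bm{y}\|\,\|(\bm{D}_1-\bm{D}_2)^{\text T}\bm{y}\|$. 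Feeding this into $\|\nabla F(\bm{D}_1)-\nabla F(\bm{D}_2)\|_F\le 3\,\mathbb{E}\big[\|\bm{y}\|\,\|h(\bm{D}_1^{\text T}\bm{y})-h(\bm{D}_2^{\text T}\bm{y})\|\big]$ and pulling the factor $\|(\bm{D}_1-\bm{D}_2)^{\text T}\bm{y}\|\le\|\bm{D}_1-\bm{D}_2\|_F\,\|\bm{y}\|$ outside the expectation reduces the Lipschitz constant to a third-order moment $\mathbb{E}\|\bm{y}\|^3=\mathbb{E}\|\bm{x}\|^3$.

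The delicate part is converting this moment into the stated constant $\sqrt{2/\pi}\,N^{3/2}(N+1)\theta$. Here the factor $\sqrt{2/\pi}$ is exactly $\mathbb{E}[|g|]$ for a standard Gaussian $g$, and the powers of $N$ are produced by the norm conversions used to linearize $\|\bm{x}\|^3$, e.g.\ $\|\bm{x}\|^3\le\|\bm{x}\|^2\,\|\bm{x}\|_1$ or $\|\bm{x}\|\le\sqrt N\,\|\bm{x}\|_\infty$, after which $\mathbb{E}[\|\bm{x}\|^2\|\bm{x}\|_1]$ splits into $N$ diagonal terms $\mathbb{E}|x_k|^3=2\sqrt{2/\pi}\,\theta$ and $N(N-1)$ cross terms $\mathbb{E}[x_k^2]\,\mathbb{E}[|x_l|]=\sqrt{2/\pi}\,\theta^2$. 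I expect this final bookkeeping --- choosing the norm conversions so that the leading $N(N+1)$ scaling and the Gaussian absolute moment combine into the claimed $L$ --- to be the only genuinely fiddly step; the differentiability caveat (the integrand is $C^1$ but not $C^2$ where a coordinate of $\bm{D}^{\text T}\bm{y}$ vanishes) is handled automatically by the entrywise-Lipschitz argument, which never invokes the Hessian.
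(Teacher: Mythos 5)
Your proposal is correct in substance --- all four conditions come out at the stated orders --- but on the two nontrivial items it takes a genuinely different route from the paper. For Item 2, the paper works at second order: it vectorizes $\bm{D}$, writes the $j$-th entry of the gradient as a conditional expectation over the Bernoulli support $\Omega$, exploits the block-diagonal sparsity of the differential $\mathrm{D}_{\bm{d}}[f_v]$ (only $N$ nonzero entries per column), and bounds each entry using $\mathbb{E}_{\Omega}\big[\|\bm{W}^{\Omega}_{n,:}\|\big]\le\sqrt{2/\pi}\,\theta$ from Shen et al., which is precisely how the constant $\sqrt{2/\pi}\,N^{3/2}(N+1)\theta$ arises. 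Your first-derivative argument via $|h(a)-h(b)|\le 2\max(|a|,|b|)\,|a-b|$ with $h(z)=z|z|$ instead reduces $L$ to $\mathbb{E}\|\bm{x}\|^3$, and with $\|\bm{x}\|^3\le\|\bm{x}\|^2\|\bm{x}\|_1$ yields $L\lesssim\sqrt{2/\pi}\,\theta N(N+1)$ --- actually tighter by a factor $\sqrt{N}$ for large $N$; note though that your caution about the kink is unnecessary, since $|z|^3$ is $C^2$ (its second derivative $6|z|$ is continuous), so the paper's Hessian route is legitimate. For Item 4, the paper does not expand sixth moments: it uses the rank-one structure of the sample gradient ($\|\bm{y}\bm{v}^{\text T}\|_F^2=\|\bm{y}\|^2\|\bm{v}\|^2$) and then Cauchy--Schwarz to split the second moment into $\big(\mathbb{E}\|\,|\bm{x}^{\text T}\bm{W}|^{\odot 2}\|^4\big)^{1/2}\big(\mathbb{E}\|\bm{D}^{true}\bm{x}\|^4\big)^{1/2}$, bounding each fourth-moment factor by $3\theta N^2$ via $\mathbb{E}[(\bm{w}^{\text T}\bm{x})^4]\le 3\theta$ for $\|\bm{w}\|\le1$, landing exactly on $3\theta N^2/M_t$. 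Your direct sixth-order expansion gives the same $\Theta(\theta N^2/M_t)$ order, but it leaves diagonal contributions of size $15N\theta$ and mixed terms that overshoot the clean constant $3\theta N^2$ when $\theta$ is near $1$ and $N$ is small, so ``retaining the dominant $N^2$ contribution'' is not pure bookkeeping: reproducing the paper's exact constants essentially requires its Cauchy--Schwarz splitting (and, in Item 2, its conditional-support bound). None of this affects Theorem 4, where all such constants are absorbed into $c_2$; and on Items 1 and 3 you match the paper --- indeed your dominated-convergence justification of unbiasedness is more careful than the paper's ``obvious,'' and your triangle-inequality diameter $2\sqrt{N}$ is the safe version of the paper's $\sqrt{2N}$ algebra, which silently discards a cross term of indefinite sign.
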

\begin{proof}
	See Appendix \ref{proof:ODLcond}.
\end{proof}
Based on Lemma \ref{lem:check} and Theorem \ref{thm:convgen}, we have the convergence result for Algorithm \ref{alg:genFWODL}, as shown in the following Theorem \ref{thm:convODL}.

\begin{theorem}[Convergence of  NoncvxSFW for the Proposed Online ODL Problem]\label{thm:convODL}
 Using  Algorithm \ref{alg:genFWODL} to solve Problem $\mathscr{P}$, we have that the expected Frank-Wolfe gap
 	$$\mathbb{E}\Big[g_{t}\Big]:=\mathbb{E}\Big[\underset{\bm{S}\in\mathbb{B}_{sp}(N,\mathbb{R})}{\max}\langle-\nabla F(\bm{D}_{t-1}), \bm{S}-\bm{D}_{t-1}\rangle\Big]$$   converges to zero, in the sense that 
	\begin{equation}
	\footnotesize
	\begin{aligned}
	&\underset{1< s\leq t} {\inf} \mathbb{E}\Big[g_s\Big]\\
	\leq &  \frac{c_2\Big(\sqrt{C_0N+\frac{\theta N^3}{\underset{t}{\min}\{M_t\}}+\theta^2N^5(N+1)^2 }+\theta N^{\frac{5}{2}}(N+1)\Big)\ln(t+2)}{ (t+3)^{\frac{1}{4}}},
	\end{aligned}
	\end{equation}
	where $C_0 = \|\nabla  F(\bm{D}_0)\|^2_F$ and  $c_2$ is a positive constant. 
\end{theorem}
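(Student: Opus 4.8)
The plan is to derive Theorem \ref{thm:convODL} as a direct specialization of the general convergence result in Theorem \ref{thm:convgen}, by first confirming that Algorithm \ref{alg:genFWODL} is a legitimate instance of Algorithm \ref{alg:genFW} applied to Problem $\mathscr{P}$, and then substituting the problem-specific constants supplied by Lemma \ref{lem:check}.

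First I would identify Problem $\mathscr{P}$ with the general template (\ref{eq:genNoncvx}) under the correspondence $F_{gen}=F$, $\mathcal{C}=\mathbb{B}_{sp}(N,\mathbb{R})$, and $f(\bm{D},\bm{y})=-\|\bm{D}^{\text T}\bm{y}\|^3_3$. To invoke Theorem \ref{thm:convgen}, I must check that the three steps of Algorithm \ref{alg:genFWODL} coincide with those of Algorithm \ref{alg:genFW} for this constraint. Step 1 is the same recursion. For Step 2, Lemma \ref{lem:LMO} shows that $\bm{S}_t=\bm{U}\bm{V}^{\text T}$, obtained from the SVD of $-\bm{G}_t$, solves $\arg\min_{\bm{S}\in\mathbb{B}_{sp}(N,\mathbb{R})}\langle\bm{G}_t,\bm{S}\rangle$, so the SVD-based oracle is exactly the LMO. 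For Step 3, Lemma \ref{lem:polar} guarantees that $Polar(\cdot)$ maps $\mathbb{B}_{sp}(N,\mathbb{R})$ into itself and does not increase $F$, so it is an admissible choice of the operator $\mathcal{P}$ required in Algorithm \ref{alg:genFW}. This establishes that Algorithm \ref{alg:genFWODL} is a valid specialization, and that the initialization $\bm{D}_0\in\mathbb{O}(N,\mathbb{R})\subset\mathbb{B}_{sp}(N,\mathbb{R})$ is feasible.

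Next I would invoke Lemma \ref{lem:check}, which verifies all four conditions of Assumption \ref{assum1} for Problem $\mathscr{P}$ and furnishes the explicit constants $diam(\mathcal{C})=\sqrt{2N}$, $L=\sqrt{2/\pi}\,N^{3/2}(N+1)\theta$, and variance bound $V/M_t=3\theta N^2/M_t$ (i.e.\ $V=3\theta N^2$). The hypotheses of Theorem \ref{thm:convgen} are therefore met, so its conclusion applies verbatim; the remaining work is purely to simplify the prefactor into the stated monomials. The additive term satisfies $L\,diam(\mathcal{C})^2=2\sqrt{2/\pi}\,N^{5/2}(N+1)\theta$, which is the second summand $\theta N^{5/2}(N+1)$ up to an absolute constant. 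For the first summand I would use $\max\{C_0,C_1\}\le C_0+C_1$ together with $diam(\mathcal{C})=\sqrt{2N}$ to write
\begin{equation}
\sqrt{\max\{C_0,C_1\}}\,diam(\mathcal{C}) \le \sqrt{2N(C_0+C_1)},
\end{equation}
and then expand $C_1=\tfrac{4V}{\underset{t}{\min}\{M_t\}}+2L^2\,diam(\mathcal{C})^2$ with the constants above to obtain
\begin{equation}
2N(C_0+C_1)=2NC_0+\frac{24\theta N^3}{\underset{t}{\min}\{M_t\}}+\frac{16}{\pi}\theta^2 N^5(N+1)^2 .
\end{equation}
Each of these three terms is an absolute-constant multiple of the corresponding term $C_0N$, $\theta N^3/\underset{t}{\min}\{M_t\}$, and $\theta^2N^5(N+1)^2$ appearing under the square root in the claimed bound, so factoring out the largest such constant and absorbing it, together with $c_1$ and the constant from $L\,diam(\mathcal{C})^2$, into a single positive constant $c_2$ yields exactly the stated inequality.

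The main obstacle here is not conceptual but is the bookkeeping of numerical constants through the square root: I must ensure the relaxation $\max\{C_0,C_1\}\le C_0+C_1$ preserves the claimed functional form, and confirm that every $N$- and $\theta$-dependence produced by substituting $L$, $V$, and $diam(\mathcal{C})$ lands precisely on the three monomials inside the square root, so they can be collected under one radical and absorbed into $c_2$. Once Lemma \ref{lem:check} and Theorem \ref{thm:convgen} are in hand, no step presents a genuine analytical difficulty, since Theorem \ref{thm:convODL} is essentially a quantitative corollary of the general result.
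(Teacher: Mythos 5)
Your proposal is correct and takes essentially the same route as the paper, whose entire proof of Theorem \ref{thm:convODL} is precisely the substitution of the constants $diam(\mathcal{C})=\sqrt{2N}$, $L=\sqrt{2/\pi}\,N^{3/2}(N+1)\theta$, and $V=3\theta N^2$ from Lemma \ref{lem:check} into Theorem \ref{thm:convgen}. Your additional checks---that Algorithm \ref{alg:genFWODL} is a valid instance of Algorithm \ref{alg:genFW} via Lemmas \ref{lem:LMO} and \ref{lem:polar}, and the explicit bookkeeping through $\max\{C_0,C_1\}\leq C_0+C_1$---merely spell out details the paper leaves implicit, and your arithmetic is right.
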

\begin{proof}
	The Proof can be made by substituting the results in Lemma \ref{lem:check} into Theorem \ref{thm:convgen}.
\end{proof}
\begin{remark}[Impact of the Key System Parameters]
	Theorem \ref{thm:convODL} suggests that a larger value of the smallest mini-batch size $\underset{t}{\min}\{M_t\}$, a smaller value of dictionary size $N$, and a smaller sparsity level $\theta$ (data becomes more sparse with a smaller $\theta$) will lead to a faster convergence speed. These conclusions are consistent with the simulation results in  Section \ref{sec:syndata}. Though the above theorem only proves the convergence to stationary points,  we have observed in experiments that the algorithm actually converges to the global optimum under very broad conditions, as shown in Section \ref{sec:syndata}. Similar phenomena have also appeared in many other works on offline ODL \cite{zhai2020complete,shen2020complete,9470930,bai2018subgradient,sun2015complete1}.
\end{remark}

\section{ Application Examples}\label{sec:app}
In this subsection, we give two important application examples
where  the online ODL method should be adopted.
\subsubsection{Example 1 (Online Data Compression on Edge Devices in the IoT Network \cite{lu2020adaptively})}
Consider an IoT network architecture shown in Fig. \ref{fig:edge},
where the data are collected from smart objects, such as wearables and industrial sensor devices, and are sent periodically to an edge device using short-range communication protocols (e.g., WiFi and Bluetooth). The edge device is responsible for  low-level processing, filtering, and sending the data to the cloud. 
\begin{figure}[htbp]
	\centering
	\includegraphics[width=1.0\linewidth]{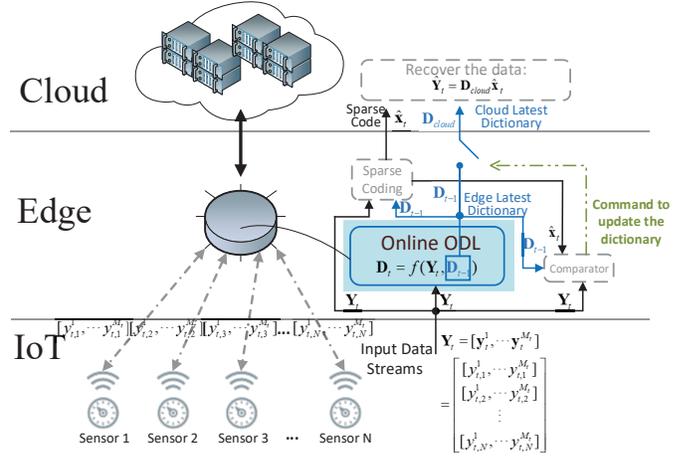}
	\caption{ Illustration of edge data compression in an IoT network using the proposed online ODL scheme.}
	\label{fig:edge}
\end{figure}
We assume that an edge device is connected to a total number of $N$ geographically distributed  IoT sensors. When  sensor measurements (temperature, humidity,
concentration, etc.) are required by the cloud from the edge for the  data analytics, the edge device transmits a compressed version of the data to save  communication resources. At the $t$-th time slot,  the ${M_t}$ samples of the sensor  measurements from all $N$ sensors, $\bm{Y}_t=[\bm{y}^1_t,\ldots,\bm{y}_t^{M_t}]\in\mathbb{R}^{N\times {M_t}}$, are transmitted to an edge  device.	 When the $j$-th sample from all $N$ sensors, i.e.,  $\bm{y}^j_t,j\in[M_t]$, is required by the cloud, the edge data compression is executed using the following steps:
\begin{enumerate}
	\item Preprocessing ({\em Sparse Coding on the Edge}): The edge device calculates the sparse code $\hat{\bm{x}}^j_t$ based on the  {\em latest edge  dictionary} $\bm{D}_{t-1}$ and the input  $\bm{y}^j_t$.
	\item Preprocessing ({\em Transmission Content Decision on the Edge}):
	Upon obtaining  the sparse code $\hat{\bm{x}}^j_t$, the edge device calculates the error between $\bm{y}^j_t$ and $\bm{D}_{cloud}\hat{\bm{x}}^j_t$ using a certain error metric $l(\bm{y}^j_t,\bm{D}_{cloud}\hat{\bm{x}}^j_t)$, where $\bm{D}_{cloud}$ is a local copy of  the {\em latest cloud  dictionary}  in the cloud. Then, the edge decides the content to transmit:
	\begin{itemize}
		\item If the error metric $l(\bm{y}^j_t,\bm{D}_{cloud}\hat{\bm{x}}^j_t)$ is larger than a predetermined threshold,   the edge device  updates its local cloud dictionary copy as $\bm{D}_{cloud}=\bm{D}_{t-1}$, and transmits the updated $\bm{D}_{cloud}$ to the cloud in a compressed format. It also  transmits the sparse code  $\hat{\bm{x}}^j_t$ to the cloud in a compressed format;
		\item Otherwise, the edge  transmits the sparse code  $\hat{\bm{x}}^j_t$ to the cloud in a compressed format.
	\end{itemize}
	\item Core Procedure ({\em Online ODL on the Edge}): The edge device runs the online ODL method to produce $\bm{D}_{t}$ using $\bm{D}_{t-1}$ and the input $\bm{Y}_t$. 
	\item Postprocessing ({\em Sensor Data Recovery on the Cloud}): The cloud recovers the required data $\bm{y}^j_t$ by $\hat{\bm{y}}_t^j=\bm{D}_{cloud}\hat{\bm{x}}^j_t$, where $\hat{\bm{x}}^j_t$ is the sparse code received from the edge and $\bm{D}_{cloud}$ is the latest dictionary in the cloud.
\end{enumerate}

The proposed ODL module produces the $\bm{D}_t$, $\bm{D}_{t-1}$ and $\bm{D}_{cloud}$, which play critical roles in the above example of data compression on  IoT edge devices.

\subsubsection{Example 2 (Real-time Novel Document Detection \cite{kasiviswanathan2012online})} Novel document detection can be used to find  breaking news or emerging topics on social media. In this application, $\bm{Y}_t=[\bm{y}^1_t,\ldots,\bm{y}_t^{M_t}]\in\mathbb{R}^{N\times {M_t}}$  denotes the mini-batch of documents arriving at time $t$, where each column of $\bm{Y}_t$ represents a document at that time, as shown in Fig. \ref{fig:detector}.
\begin{figure}
	\centering
	\includegraphics[width=0.7\linewidth]{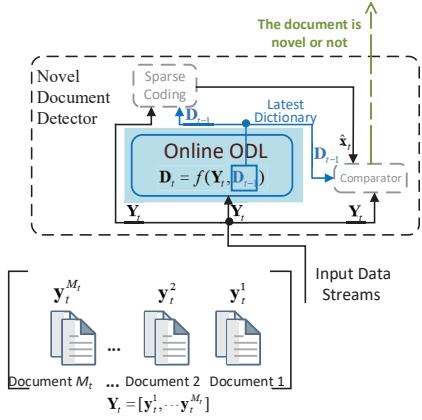}
	\caption{ Illustration of  real-time novel document detection using the proposed online ODL scheme.}
	\label{fig:detector}
\end{figure}
Each document is represented by a conventional vector space model such as TF-IDF \cite{cambridge2009online}. For the  mini-batch of documents $\bm{Y}_t$ arriving at time $t$, the novel document detector operates using the following steps:
\begin{enumerate}
	\item Preprocessing ({\em Sparse Coding}): For all $\bm{y}^j_t$ in $\bm{Y}_t$,  the detector calculates the sparse code $\hat{\bm{x}}^j_t$ based on the {\em latest  dictionary} $\bm{D}_{t-1}$ and the input $\bm{y}^j_t$.
	\item Preprocessing ({\em Novel Document Detection}): For all $\bm{y}^j_t$ in $\bm{Y}_t$, the detector calculates the error between $\bm{y}^j_t$ and $\bm{D}_{t-1}\hat{\bm{x}}^j_t$ with an error metric $l(\bm{y}^j_t,\bm{D}_{t-1}\hat{\bm{x}}^j_t)$. 
	\begin{itemize}
		\item If the error  $l(\bm{y}^j_t,\bm{D}_{t-1}\hat{\bm{x}}^j_t)$ is larger than some predefined threshold,   the detector marks the document $\bm{y}^j_t$ as {\em novel};
		\item Otherwise,  the detector marks the document $\bm{y}^j_t$ as {\em non-novel}.
	\end{itemize}
	\item Core Procedure ({\em Online ODL}): The detector runs the online ODL method to produce the new dictionary $\bm{D}_t$ using $\bm{D}_{t-1}$ and the input $\bm{Y}_t$. 
\end{enumerate}

\section{Experiments}\label{sec:exp}
This section provides experiments demonstrating the effectiveness and the
efficiency of our scheme compared to the state-of-the-art prior works. All the experiments 
are conducted in Python 3.7 with a 3.6 GHz Intel Core I7 processor.
\subsection{List of Baseline Methods}\label{subsec:list} The baseline methods are listed as follows: \footnote{{The proposed method, Baselines 1 and 2 have no hyperparameters. For Baselines $3\sim 5$,   grid search is adopted for tuning the hyperparameters.  The grids are drawn around the hyperparameter values given in the baseline papers \cite{mairal2009online,spatial,akhtar2017nonparametric}. In the experiments with real-world sensor data, we extend the grids for $\lambda$ to $[1,20]$ for Baselines 3 and 4 when $\eta_{0}=2,8,10$. We pick the hyperparameter with the best performance in  hindsight for the online method, Baseline 3, in both the synthetic data and the real-world data experiments. For the offline methods, Baselines 4 and 5, we adopt the walk-forward validation method \cite{zbikowski2015using} with  $4593$ testing data as the holdout data to pick the hyperparameters in the real-world data experiment.}}
\begin{itemize}
	\item \textbf{Baseline 1} (SFW) \cite{mokhtari2020stochastic}: { This baseline adopts the recently proposed SFW algorithm to solve the online ODL problem $\mathscr{P}$ in (\ref{eq:DLonlinel32}). We compare the proposed NoncvxSFW to this baseline in terms of the convergence property to show the effectiveness and efficiency of the proposed NoncvxSFW algorithm for solving the online ODL problem $\mathscr{P}$.}
	
	\item \textbf{Baseline 2} ($\ell_4$\_NoncvxSFW) \cite{zhai2020complete}: In this baseline, we replace the sparsity-promoting function $-\Vert\cdot\Vert^3_3$ in the ODL problem $\mathscr{P}$ with $-\Vert\cdot\Vert^4_4$. Then, we solve the problem by the NoncvxSFW algorithm. This baseline is adopted to demonstrate the advantage of the choice of the negative $\ell_3$-norm objective in the online ODL formulation.
	\item\textbf{Baseline 3} (Online AODL) \cite{mairal2009online}: This baseline alternately learns the sparse code and the dictionary by solving the following optimization problem in an online manner:
	 \begin{equation}\label{eq:AODLonline}  
	\begin{aligned}
	\underset{\bm{D}\in\mathbb{R}^{N\times N},\{\bm{x}_t\in\mathbb{R}^N\}_{t=1}^T}{\text{minimize}}\quad
	& \sum_{t=1}^T\|\bm{y}_t-\bm{D}\bm{x}_t\|_F^2+\lambda\|\bm{x}_t\|_1.
	\end{aligned}
	\end{equation}
	The online AODL is introduced to show the advantage of the online ODL scheme. The Python SPAMS toolbox is used to implement this baseline.\footnote{Python code and  documents are available at  \url{http://spams-devel.gforge.inria.fr/downloads.html}.}
	
	{ 	\item\textbf{Baseline 4} (Offline-DeepSTGDL) \cite{spatial}: This baseline  is a recently proposed offline dictionary learning method for behind-the-meter load and photovoltaic forecasting. In DeepSTGDL, a deep encoder first transforms the load measurements into a latent space which captures the spatiotemporal patterns of the data. Then, a spatiotemporal dictionary and a sparse code  are alternately learned to capture the significant spatiotemporal patterns for forecasting. 
		\item\textbf{Baseline 5} ({Offline-NCBDL}) \cite{akhtar2017nonparametric}: This baseline  is an offline nonparametric Bayesian approach in which the dictionary is inferred from a hierarchical Bayesian model based on the  Beta-Bernoulli process and Gibbs sampling. }   
\end{itemize}

\subsection{Experiments with Synthetic Data}\label{sec:syndata}
\subsubsection{Experiment Settings}
{ We evaluate the convergence property of 
	 different online dictionary
	learning methods with synthetic data. For all the experiments, we conduct  $100$ independent  Monte Carlo trials. At the $l$-th trial, we  generate the measurements $\bm{y}^j_t(l)={\bm{D}}^{true}(l)\bm{x}^j_t(l)\quad (j\in[M_t] ,1 \leq t \leq T)$, with the ground truth dictionary ${\bm{D}}^{true}(l)$ drawn uniformly randomly from the orthogonal group $\mathbb{O}(N,\mathbb{R})$, and with sparse signals $\bm{x}^j_t(l)\in\mathbb{R}^N$ drawn from an i.i.d. Bernoulli-Gaussian distribution, i.e.,  $\bm{x}^j_{t}(l) \overset{i.i.d}{\sim} \mathcal{BG}(\theta)$. For a fair comparison, all the methods share the same random initial point at each trial. Without loss of generality, we set $M_t=B,$ $\forall t$ and $T=3\times 10^3$ for data generation. 
	
	To evaluate the convergence property,  the  error metric at time index $t$  is calculated by
	\begin{equation}\label{eq:Merror}
	\text{Error}_t= \frac{1}{100}\sum_{l=1}^{100}|1-\frac{\|\bm{D}^{\text T}_t(l){\bm{D}}^{true}(l)\|_4^4}{N}|.
	\end{equation}
	This metric is an averaged  measure for the difference between the dictionary learning result and the ground truth dictionary, since  the true dictionary at the $l$-th trial will be perfectly recovered if  $\frac{\|\bm{D}^{\text T}_t(l){\bm{D}}^{true}(l)\|_4^4}{N}=1$ \cite{zhai2020complete,shen2020complete}. We compare the $\text{Error}_t$ versus the number of iterations of the proposed method and the online baseline methods as follows.} 
\subsubsection{Convergence Comparison with Different System Parameters}

In Fig. \ref{fig:convdiffb}, we show the convergence properties under different mini-batch sizes $B$ with a fixed dictionary size $N=10$ and sparsity level $\theta = 0.3$.  { We tune $\lambda$ for Baseline 3 at $\lambda=0.1$.} The results show that a larger mini-batch size $B$  can accelerate the convergence of all methods, and the proposed method with $B=10$ can converge to an error at $10^{-3}$ at around  the $1000$-th time index, which is faster than the baselines.\footnote{ If $t=\infty$, there should be no gap between the results for Baseline 2 and the proposed method under the same mini-batch size. However, in the simulation, $t=\infty$ is prohibitive. Therefore, when $t$ is finite, the difference of the curves comes from the fact that the $\ell_{3}$-norm-based formulation has a lower sample complexity than the $\ell_{4}$-norm-based formulation \cite{shen2020complete}. Hence, when we have a finite number of samples ($t$ is finite), the optimal point of the proposed formulation will be closer to the ground truth than the formulation in Baseline 2, and therefore shows a better performance.}
\begin{figure}[htbp]
	\centering
	\includegraphics[width=1.0\linewidth]{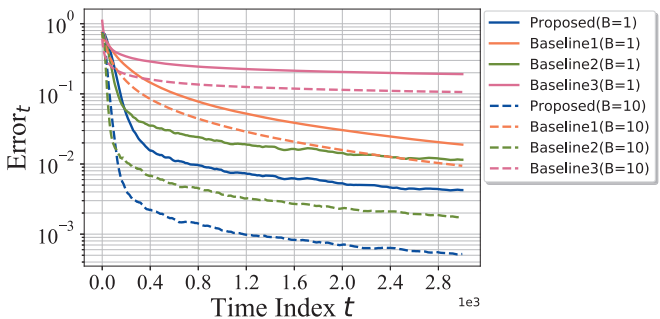}
	\caption{ Convergence comparison under different mini-batch sizes $B$ with  dictionary size $N = 10$ and sparsity level $\theta = 0.3$.}
	\label{fig:convdiffb}
\end{figure}

In Fig. \ref{fig:convdiffn}, the convergence curves  with different dictionary sizes $N$ are plotted. We fix the  mini-batch size at  $B=10$  and  the sparsity level at $\theta = 0.3$. { We tune $\lambda$ for Baseline 3 at $\lambda=0.1$.} The results show that  all the methods have a faster convergence rate under a  smaller dictionary  size $N$. In addition, the proposed  method   can achieve a smaller error with fewer number of iterations  than the baselines.
\begin{figure}[htbp]
	\centering
	\includegraphics[width=1.0\linewidth]{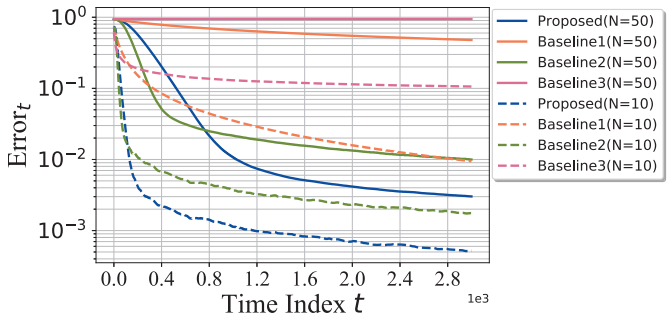}
	\caption{ Convergence comparison under different dictionary sizes $N$ with mini-batch size $B = 10$ and sparsity level $\theta = 0.3$.}
	\label{fig:convdiffn}
\end{figure}

In Fig. \ref{fig:convdifftheta}, we compare the convergence properties   with different sparsity levels $\theta$. The  mini-batch size  and  dictionary size are fixed at $B=10$ and $N=10$. { We tune $\lambda$ for Baseline 3. Specifically, we set $\lambda=0.1$ with $\theta =0.3$ and $\lambda=0.05$ with $\theta =0.5$.} The results show that  both the proposed method and the baselines  have a faster convergence rate with a  more sparse $\bm{x}$ (smaller $\theta$). Moreover, the proposed  method achieves $10^{-3}$ error at around the $2000$-th time index with $\theta = 0.5$, which is    faster than the baselines.

\begin{figure}[htbp]
	\centering
	\includegraphics[width=1.0\linewidth]{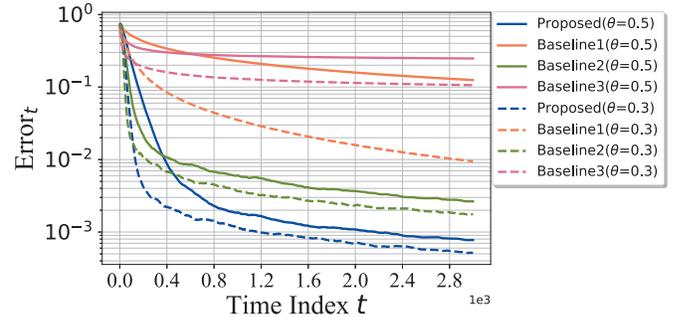}
	\caption{ Convergence comparison under different sparsity levels $\theta$ with dictionary size $N = 10$ and mini-batch size $B = 10$.}
	\label{fig:convdifftheta}
\end{figure}

\subsection{Experiments with a Real-World Sensor Data Set}

	 \subsubsection{Experiment Data Set} We evaluate the performance of the IoT sensor data compression task with different dictionary learning methods. The experiments are carried out on the {\em  Airly network air quality data set} \cite{poland}, which records  temperature, air pressure, humidity, and the concentrations of particulate matter from 00:00, Jan. 1, 2017 to 00:00, Dec. 25, 2017. The sensor readings are measured by a network of $N=56$ low-cost sensors located in Krakow, Poland, and each sensor has its own location.\footnote{Detailed location information with latitude and longitude can be found in the data set \cite{poland}.} There are $8593$ readings for each item from each sensor sampled per hour. In this work, we use the {\em temperature readings} from all the sensors as the input to the  dictionary learning schemes. Since there is a missing data issue in the raw data, we replace the missing data  with the mean readings over all the sensors at the times that data are missing. Fig. \ref{fig:sensorraw}  displays the  temperature readings that  the dictionary learning  schemes process.
	\begin{figure}
		\centering
		\includegraphics[width=0.85\linewidth]{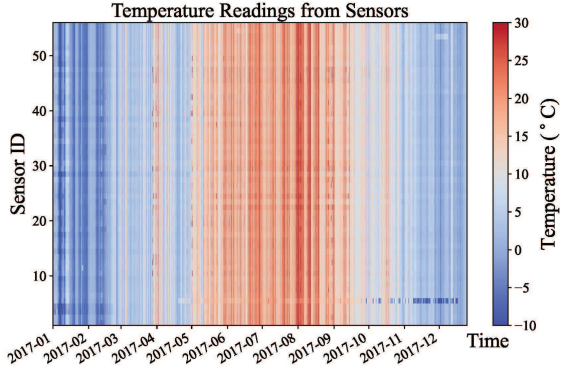}
			\caption{ $56\times 8593$ temperature readings from the {\em  Airly network air quality data set} \cite{poland}. The readings are from a  sensor network with $56$ sensors  deployed in different locations  in  Krakow, Poland.   The readings are collected every hour from Jan 1, 2017 to Dec. 25, 2017.   }
		\label{fig:sensorraw}
	\end{figure}

{ \subsubsection{Performance Metrics}
At each time $t$,  $M_t$ readings, $\bm{y}^j_t\in \mathbb{R}^{56}\quad (j\in[M_t] ,1 \leq t \leq T)$, are uploaded to  the dictionary learning processors, and will be approximated by $\tilde{\bm{y}}^j_t\quad (j\in[M_t] ,1 \leq t \leq T)$ through  calculations of the learned dictionary and the sparse code. To show  the compression performance of the dictionary learning methods, two performance metrics are calculated under different compression ratios.  
The compression ratio is defined as
\begin{equation}
\text{compression ratio}=\Big\lfloor\frac{56}{\eta_0}\Big\rfloor,
\end{equation}
where $\eta_0$ is the number of nonzero values in the sparse code.

The first performance metric is the RMSE, which is defined by 
\begin{equation}\label{eq:RSME}
\begin{aligned}
\text{RMSE}&=\sqrt{\frac{\sum^{T}_{t=0}\sum^{M_t}_{j=1}\|\tilde{\bm{y}}^j_t-\bm{y}^j_t\|_2^2}{\sum^{T}_{t=0}\sum^{M_t}_{j=1}\|\bm{y}^j_t\|_2^2}},\\
& \text{with}\quad \tilde{\bm{y}}^j_t = \phi (\bm{D}^{est}_t,\tilde{\bm{x}}^j_t), \|\tilde{\bm{x}}^j_t\|_0=\eta_0,
\end{aligned}
\end{equation}
where $\phi(\cdot)$ is  calculations of the dictionary and the sparse code  determined by each method, as we will specify in Section \ref{sec:proce}, $\bm{D}^{est}_t$ is the dictionary for compression at time $t$, and $\tilde{\bm{x}}^j_t$ is the sparse code  for the $j$-th reading at time $t$ with  $\eta_0$  nonzero values.

To provide more persuasive  results, we have also calculated the HLN-corrected  Diebold-Mariano (HLNDM) \cite{harvey1997testing} test results to quantitatively evaluate  the compression accuracy of different methods from a statistical point of view. Specifically, the HLNDM statistic is calculated by 
\begin{equation}
\begin{aligned}\label{eq:DM}
HLNDM=\sqrt{\frac{T-1-2h+h(h-1)}{T}}\frac{\bar{d}}{\sqrt{\frac{\hat{f}_{d}(0)}{T}}},
\end{aligned}
\end{equation}
 where we set $h=4$   as the time horizon in the experiments, and $\bar{d}=\sum_{t=1}^{T}d_{t}=\sum_{t=1}^{T}\Big(\text{RMSE}_{t}^{2}(1)-\text{RMSE}_{t}^{2}(2)\Big)$ is the average of the distance between the  instantaneous   RMSE produced by two different methods (method $1$ and method $2$) at time index $t$. Specifically, we have $d_{t}=\text{RMSE}_{t}^{2}(1)-\text{RMSE}_{t}^{2}(2)$ and
 \begin{equation}
 \begin{aligned}
 \text{RMSE}_t(1)&=\sqrt{\frac{\sum^{M_t}_{j=1}\|\tilde{\bm{y}}^j_t(1)-\bm{y}^j_t\|_2^2}{\sum^{M_t}_{j=1}\|\bm{y}^j_t\|_2^2}},\\
 & \text{with}\quad \tilde{\bm{y}}^j_t(1) = \phi (\bm{D}^{est}_t(1),\tilde{\bm{x}}^j_t(1)), \|\tilde{\bm{x}}^j_t(1)\|_0=\eta_0,
 \end{aligned}
 \end{equation}
 where $\tilde{\bm{y}}^j_t(1)$ is the estimated readings calculated from method $1$ for the $j$-th reading at time $t$.  Furthermore, in (\ref{eq:DM}), we have \begin{equation}\small
\begin{aligned}
\hat{f_{d}}(0)=&\sum_{k=-(T-1)}^{T-1}\Big(\mathbb{I}(\frac{k}{h-1})\frac{1}{T}\\
&\times\sum_{t=|k|+1}^{T}(d_{t}-\bar{d})(d_{t-|k|}-\bar{d})\Big),
\end{aligned}
\end{equation} where the indicator function is expressed as \begin{equation}
\mathbb{I}(\frac{k}{h-1})=\begin{cases}
\begin{array}{c}
1\\
0
\end{array} & \begin{array}{c}
|\frac{k}{h-1}|\leq1,\\
otherwise.
\end{array}\end{cases}
\end{equation}

  To interpret the HLNDM statistic, we define the null hypothesis $H_{0}:\mathbb{E}(d_{t})=0,\forall t$, which means that the two methods have the same compression accuracy in terms of  statistics. The null hypothesis of no difference will be rejected if the computed HLNDM statistic falls outside the range of $-z_{\alpha/2}$ to  $z_{\alpha/2}$, i.e., 
\begin{equation}
|HLNDM|>z_{\alpha/2},
\end{equation}where $z_{\alpha/2}$ is the upper (or positive) $z$-value from the standard normal table corresponding to half of the desired $\alpha$ significance level of the test.  In the experiment, we set the proposed method as the reference method $1$ to calculate the HLNDM values for the baselines.
 \subsubsection{Experiment Procedures}\label{sec:proce}
 In the experiments, both online and offline methods are considered. We first introduce the experiment procedures for the {\em offline methods (Baseline 4 and Baseline 5)}.
 \begin{itemize}
 	\item {\em Offline Training}: The first $4000$ temperature readings are used to train the weights and dictionaries in Baseline 4 and Baseline 5. For Baseline 4 \cite{spatial},  the first three terms in the training loss  \cite[Eq.(4)]{spatial} are considered, and  a spatiotemporal long short-term memory (ST-LSTM) is trained as the deep encoder $f_{enc}(\cdot)$ and a $4$-layer rectified linear unit (ReLU) neural network is trained as the node decoder $f_n(\cdot)$ with the Adam optimizer using $200$ epochs. The edge decoder $f_e(\cdot)$ follows the expression in \cite[Eq.(10)]{spatial}. The hyper-parameters are fine-tuned to be $m=6$,  $K=50$, $d_h=20$, $\lambda_e=0.25$ and $\lambda_n=0.25$ for better performance. In addition, the choices of $\lambda$ are listed in Table  \ref{tab:lp}. For Baseline 5 \cite{akhtar2017nonparametric}, the upper bound of the number of atoms for the dictionary is set to  $K=80$ and the Beta distribution parameters are set to $a_0=b_0 = \frac{4000}{8}$ for better performance. Other parameters have the same values as those in \cite[Section V]{akhtar2017nonparametric}.
 	\item {\em Compression}: The remaining $4593$ readings are grouped into $766$ mini-batches with mini-batch size $M_t=6$.\footnote{The last mini-batch of readings has the mini-batch size $M_{766}=3$.}   Then each mini-batch of readings is provided to the offline dictionary learning methods at one specific time index to test the performance of the offline learned dictionary.  Specifically,  for Baseline 4, the sparse code for the $j$-th test reading at time $t$, $\bm{y}^j_t\quad (j\in[M_t])$, is calculated by  solving the following sparse coding problem provided in \cite{spatial}:
 		 \begin{equation}
 		\tilde{\bm{x}}^j_t = \mathcal{T}_{\eta_0}(\arg\min\|f_{enc}(\mathcal{G}(\bm{y}^j_t))-\bm{D}^{est}_t\bm{x}\|_F^2+\lambda\|\bm{x}\|_1),
 		\end{equation} with the sklearn Lasso toolbox.\footnote{Implementation details can be found at \url{http://scikit-learn.org/stable/modules/generated/sklearn.linear_model.Lasso.html}} The estimated reading $\tilde{\bm{y}}^j_t$ is calculated by 
 		$\tilde{\bm{y}}^j_t = \phi(\bm{D}^{est}_t,\tilde{\bm{x}}^j_t)=f_n(\bm{D}^{est}_t\tilde{\bm{x}}^j_t),$
 		where $\bm{D}^{est}_t$ is equal to the offline learned dictionary for all $t$, $f_{enc}(\cdot)$ and $f_n(\cdot)$ are the offline learned encoder and decoder, and $\mathcal{G}(\bm{y}^j_t)$ is the graph embedding for  $\bm{y}^j_t$ given by \cite[Section II-B]{spatial}. $ \mathcal{T}_{\eta_0}(\bm{a})$ selects $\eta_0$ elements in $\bm{a}$ with the largest $\ell_2$-norm and sets the other elements to zero. For Baseline 5,  the sparse code for the $j$-th test reading  at time $t$, $\bm{y}^j_t\quad (j\in[M_t])$, is calculated by solving the following problem provided by \cite{akhtar2017nonparametric}: \begin{equation}
 		\tilde{\bm{x}}^j_t = \underset{\|\bm{x}\|_0\leq \eta_0}{\arg\min}\|\bm{y}^j_t-\bm{D}^{est}_t\bm{x}\|_2^2,
 		\end{equation} 	with the sklearn OMP toolbox\footnote{Implementation details can be found at \url{https://scikit-learn.org/stable/modules/generated/sklearn.linear_model.OrthogonalMatchingPursuit.html}}, and the estimated reading $\tilde{\bm{y}}^j_t$ is calculated by $
 		\tilde{\bm{y}}^j_t = \phi(\bm{D}^{est}_t,\tilde{\bm{x}}^j_t)=\bm{D}^{est}_t\tilde{\bm{x}}^j_t$,
 		where $\bm{D}^{est}_t$ is equal to the offline learned dictionary for all $t$.
 
	\item {\em Performance Metric Calculation}: 	The RMSE values and the HLNDM test results under different compression ratios  are calculated  for Baseline 4 and Baseline 5 according to (\ref{eq:RSME}) and (\ref{eq:DM}).
\end{itemize}
Next, we elaborate on the experiment procedures for the {\em online methods (the proposed method, Baseline 1, Baseline 2, Baseline 3)}.
	\begin{itemize}
		\item {\em Initialization}: Since there is no offline training in the online methods, the last $4593$ readings are provided to the online methods.   The first $100$  readings  are utilized to initialize the dictionary $\bm{D}_0$ by the batch version of each online method with $20$ iterations.
		\item {\em Online Learning}: The remaining $4493$ readings are grouped into $749$ mini-batches with mini-batch size $M_t=6$.\footnote{The last mini-batch of readings has the mini-batch size $M_{749}=5$.} The readings $\bm{y}^j_t\quad (j\in[M_t])$  at time $t$ are provided  to the online methods to produce $\bm{D}_t$, sequentially along $t=1,\ldots,749$.
		\item {\em Online Compression}: Each reading in the mini-batch is compressed  by the sparse code  after obtaining the dictionary $\bm{D}_t$.  For the proposed method,  Baseline 1, and Baseline 2, the sparse code for the $j$-th reading at time $t$, $\bm{y}^j_t\quad (j\in[M_t])$, is calculated using
			\begin{equation}
			\tilde{\bm{x}}^j_t = \mathcal{T}_{\eta_0}(({\bm{D}_t})^{\text T}\bm{y}^j_t).
			\end{equation}
		 For Baseline 3,  the sparse code is
			\begin{equation}
		\tilde{\bm{x}}^j_t = \mathcal{T}_{\eta_0}(\arg\min\|\bm{y}^j_t-\bm{D}_t\bm{x}\|_F^2+\lambda\|\bm{x}\|_1).
			\end{equation}
				For all online methods, the estimated reading  $\tilde{\bm{y}}^j_t$ is calculated by $
				\tilde{\bm{y}}^j_t = \phi(\bm{D}^{est}_t,\tilde{\bm{x}}^j_t)=\bm{D}_{t}\tilde{\bm{x}}^j_t$.
		\item {\em Performance Metric Calculation}: Then, the RMSE values and the HLNDM test results under different compression ratios  are calculated  for the proposed method,  Baseline 1, Baseline 2, and Baseline 3 according to (\ref{eq:RSME}) and (\ref{eq:DM}).
\end{itemize}

\subsubsection{Performance Comparison and Discussion}
 The RMSE, HLNDM and CPU time  comparison among different dictionary learning schemes under different compression ratios are given in Table \ref{tab:lp}. The results demonstrate that the proposed online ODL scheme achieves a lower RMSE  than the other methods for  the data compression task under different compression ratios. The proposed online ODL scheme keeps tracking the input readings and  adapts the dictionary learning within a small optimization space. Hence it is capable of finding a good dictionary with fewer streaming data received at each time index.

 To interpret the HLNDM results,  we adopt a widely used significance level of $\alpha=0.05$ with $z_{\alpha/2}=1.96$. The HLNDM statistic results and the RMSE results show that the proposed method has better sensor data compression performance  in most cases. Though the HLNDM between the proposed method and Baseline 2  is $1.51$ ($<1.96$) when $\eta_0=17$,  the probability that these two methods have the same level of accuracy is only $6.55\%$.  
 
  We count the per-time-slot CPU time to show the computational efficiency of the proposed method. For the offline methods, we  count the per-time-slot CPU time  on the test reading compression stage; i.e., we ignore the offline training cost. In this case, the proposed online ODL method still costs fewer computational resources. This is because  it only executes one iteration at each time index with a simple thresholding of  a matrix-vector product to reconstruct the sensor readings, while the offline methods  require solving an optimization problem.
 
 Fig. \ref{fig:forcast} shows the compression results  of the proposed online ODL method for $1000$   temperature readings of the $45$-th wireless sensor  from 21:00, Oct. 27, 2017 to 12:00, Dec. 08, 2017 under different compression ratios. As shown in this figure, the proposed method can compress the temperature readings with a satisfactory accuracy. The maximum compression errors  are $0.92$ $^{\circ}C$ with $\eta_0=8$ and $0.23$ $^{\circ}C$ with $\eta_0=17$.
 \begin{table*}[htpb]
 	\footnotesize
 	\centering
 	\caption{ {  The performance of different methods in compressing  sensor readings of  temperature in 2017 in Krakow, Poland \cite{poland}}}
 	\begin{threeparttable}
 		{  \begin{tabular}{lccccccccc}
 				\toprule[2pt]
 				\multicolumn{1}{c}{	Compression ratio} &\multicolumn{3}{c}{$28$  $(\eta_0=2)$}  &  \multicolumn{3}{c}{$7$ $(\eta_0=8)$}&  \multicolumn{3}{c}{$5$ $(\eta_0=10)$}  \\ \hline
 				Methods \tnote{*} &  Time (ms)      &    RMSE  &HLNDM   & Time (ms)   &   RMSE    &HLNDM      & Time (ms)         &     RMSE &HLNDM    \\ 
 				\hline
 				Proposed&     \bf{1.018}      &    \bf{4.82\%}   & Ref   &    \bf{1.021}      &   \bf{  2.74\%}   & Ref     &   \bf{1.032}     &   \bf{2.53\%} & Ref    \\
 				\hline
 				Baseline 1 (SFW)&   1.022      &      22.53\% &  7.56  &     1.030   &     14.78\% & 3.57   &   1.034    &   13.90\%  &   3.48 \\ 
 				\hline
 				Baseline 2 ($\ell_4$\_NoncvxSFW)&    1.076  &      5.22\%  &  6.212  &     1.083    &      3.16\%  &2.65   &    1.088     &   3.04\%&2.70    \\
 				\hline
 				Baseline 3 (Online AODL) &  8.932      &      47.45\% &  4.38  &     19.58     &     25.06\% &   3.29 &    39.65   &    24.71\% & 2.11  \\
 				\hline
 				Baseline 4 (Offline-DeepSTGDL) &     4.606      &      24.8\%  & 2.42 &      10.14    &      10.66\% &  5.47 &    18.04      &    7.27\% &3.41   \\
 				\hline
 				Baseline 5 (Offline-NCBDL) &    2.717     &      34.72\%  & 5.94 &       4.028   &      24.87\% &   6.17 &   4.347    &    24.72\%  & 6.51 \\
 				\hline
 				\hline
 				\multicolumn{1}{c}{	Compression ratio} &\multicolumn{3}{c}{$3$  $(\eta_0=17)$}  &  \multicolumn{3}{c}{$2$ $(\eta_0=25)$}&  \multicolumn{3}{c}{$1$ $(\eta_0=35)$}  \\ \hline
 				Methods \tnote{*} &     Time (ms)      &    RMSE  &HLNDM   & Time (ms)    &   RMSE    &HLNDM      &  Time (ms)          &     RMSE &HLNDM    \\ 
 				\hline
 				Proposed&     \bf{1.036}      &     \bf{1.97\%}   & Ref   &    \bf{1.040}      &   \bf{  1.20\%}   & Ref     &   \bf{1.104}     &   \bf{0.68\%} & Ref    \\
 				\hline
 				Baseline 1 (SFW)&     1.040    &      9.68\% & 3.04   &     1.063   &     9.40\% & 2.37   &   1.163   &   8.81\%  & 7.54   \\ 
 				\hline
 				Baseline 2 ($\ell_4$\_NoncvxSFW)&    1.089   &      2.40\%  &  1.51 &     1.096   &      1.48\%  &2.91   &    1.106    &   1.08\%&  2.17  \\
 				\hline
 				Baseline 3 (Online AODL) &      105.9    &      19.54\% & 4.14   &      111.2      &     19.49\% & 5.34   &    137.7    &    17.90\% &  6.15 \\
 				\hline
 				Baseline 4 (Offline-DeepSTGDL) &    38.69   &      7.00\%  &5.28  &    39.29   &      6.74\% &  5.21 &    53.08     &    6.15\% &  6.28 \\
 				\hline
 				Baseline 5 (Offline-NCBDL) &    6.057      &      16.86\%  & 6.80 &       7.550  &      13.51\% & 5.33   &   9.675     &    13.42\%  &6.24  \\
 				\bottomrule[2pt]
 		\end{tabular}}	\label{tab:lp}
 		\begin{tablenotes}{  
 				\footnotesize
 				\item[*] For $\eta_0 = 2,8,10,17,25,35$, the hyper-parameter $\lambda$ for Baseline 3 is set to $18,5,1,0.3,0.2,0.1$, and $\lambda$ for Baseline 4 is set to $20,8,4,0.4,0.4,0.2$, for better performance.}
 			
 		\end{tablenotes}
 	\end{threeparttable}

 \end{table*}
\begin{figure}[htpb]
	\centering
	\includegraphics[width=0.9\linewidth]{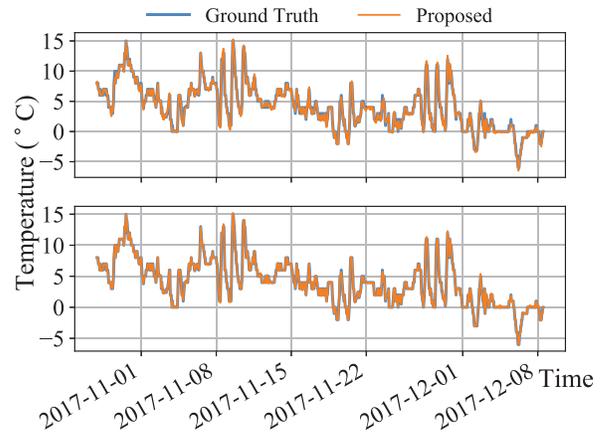}
	\caption{{ Sensor temperature reading compression results of the proposed method for the $45$-th wireless sensor from 21:00, Oct. 27, 2017 to 12:00, Dec. 08, 2017. Top to bottom rows: $\eta_0=8,17$, compression ratio $=7,3$.} }
	\label{fig:forcast}
\end{figure}

 \subsection{Generalization Study}
 To demonstrate the generalization of the proposed Algorithm \ref{alg:genFW} based on the generic problem (\ref{eq:genNoncvx}),
we evaluate the performance of  Algorithm \ref{alg:genFW} on the Sparse Principal Component Analysis (SPCA) problem \cite{shen2008sparse,journee2010generalized} which is a powerful framework to find a sparse principal component for seeking a reasonable trade-off between the statistical fidelity and interpretability of data. Specifically, for the single-unit SPCA problem, one can  have the following formulation:
\begin{equation}
\begin{aligned}\label{eq:spca}
\underset{\bm{\bm{z}}\in\mathbb{B}^{N}}{min}\mathbb{E}_{\bm{y}\sim P}[-\bm{\bm{z}}^{T}\bm{y}\bm{y}^{T}\bm{\bm{z}}+\lambda H_{\mu}(\bm{\bm{z}})],
\end{aligned}
\end{equation}
where ${\ensuremath{\mathbb{B}^{N}}}$ is the $N$ dimensional closed unit ball, which is a convex set, $H_{\mu}(\cdot)$ is the Huber loss with parameter $\mu$ to promote the sparsity of the principal component, and $\lambda$ is a regularization parameter. The objective of Problem (\ref{eq:spca})  is non-convex. Problem (\ref{eq:spca}) can be solved by the proposed generic Algorithm \ref{alg:genFW}, with the LMO over the unit ball as $\bm{s}_{t}=\arg\min_{\bm{s}\in\mathbb{B}^{N}}\langle\bm{g}_{t},\bm{s}\rangle=\frac{-\bm{g}_{t}}{\|\bm{g}_{t}\|_{2}}$. 
 
 We test the convergence property of the proposed Algorithm \ref{alg:genFW} for Problem (\ref{eq:spca}) in a number  of experimental settings with synthetic data. In the experiments, we follow the procedures in \cite{shen2008sparse,journee2010generalized} to generate random data with a covariance matrix $\boldsymbol{\text{\ensuremath{\Sigma}}}=\bm{V\Upsilon V}^{T}\in\mathbb{R}^{N\times N}$ containing a sparse leading eigenvector. To do so, $\boldsymbol{\text{\ensuremath{\Sigma}}}$ is synthesized by constructing $\bm{\Upsilon}$ and $\bm{V}$. Specifically, we have $\bm{\Upsilon}=diag(\bm{\gamma})$, where $\bm{\gamma}=[100,1,1,\ldots,1]$.  To synthesize $\bm{V}=[\bm{v}_{1},\bm{v}_{2},\ldots,\bm{v}_{N}]$, we first construct the sparse leading eigenvector $\bm{v}_{1}$. Specifically, the $i$-th component of $\bm{v}_{1}$ is constructed by the following expression:
 \begin{equation}
 v_{1,i}=\begin{cases}
 \begin{array}{c}
 \frac{1}{\text{\ensuremath{\sqrt{q}}}}\\
 0
 \end{array} & \begin{array}{c}
 i=1,\ldots,q,\\
 otherwise.
 \end{array}\end{cases}
 \end{equation}
 Next, vectors $\bm{v}_{2},\ldots,\bm{v}_{N}$ are randomly drawn from $\mathbb{S}^{N-1}$ until a full-rank $\bm{V}$  is obtained.  Then, we apply the Gram-Schmidt orthogonalization method to the full-rank $\bm{V}$ to obtain an orthogonal matrix $\bm{V}$. Finally, $T$  mini-batches of data with mini-batch size $B$ are generated by drawing independent samples from a zero-mean normal distribution with covariance matrix $\boldsymbol{\Sigma}=\bm{V\Upsilon V}^{T}$. The resultant data samples are denoted by $\bm{y}_{t}^{j}\quad(j\in[B],1\leq t\leq T)$, where $t$ is the index for the mini-batch and $j$ is the index for a sample within one mini-batch.
 
 The target for the algorithm is to find the sparse leading principal component $\bm{v}_{1}$ with the streaming input $\bm{y}_{t}^{j}\quad(j\in[B],1\leq t\leq T)$ by solving Problem (\ref{eq:spca}). We set $\mu=0.2$, $\lambda=1$ for the problem and generate $T=3\times 10^3$ mini-batches of data from the above procedures. In the experiments, we test the recovery error by varying the dimensions $N$, the mini-batch size $B$, and the number of non-zero elements in the principal component $q$. The recovery error is defined by \begin{equation}
\text{Error}_{t}=|1-|\bm{z}_{t}^{T}\bm{v}_{1}||
 \end{equation} since the maximum of $|\bm{z}_{t}^{T}\bm{v}_{1}|$ is $1$, and it is achieved when $\bm{z}_{t}=\bm{v}_{1}$. The results with $\text{Error}_{t}$ against the number of iterations are shown in Fig. \ref{fig:spca}, where $\text{Error}_{t}$ at each $t$ is averaged over $100$ independent Monte Carlo trials. As indicated by the results under various experimental settings, the proposed method for generic problem (\ref{eq:genNoncvx}) generalizes well to this SPCA problem.
 \begin{figure}[htbp]
 	\centering
 	\includegraphics[width=0.85\linewidth]{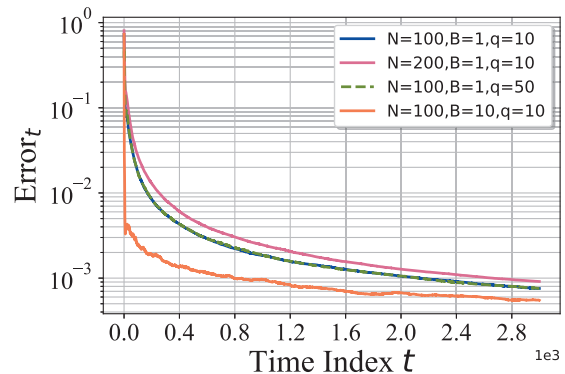}
 	\caption{{ Performance of the proposed Algorithm 1  in solving the SPCA problem.}}
 	\label{fig:spca}
 \end{figure}}

\section{Conclusion}\label{sec:concl}
We have proposed in this paper a novel online orthogonal dictionary learning scheme with a new relaxed problem formulation, a Frank-Wolfe-based online algorithm, and the convergence analysis. Experiments on synthetic data and a real-world data set  show the superior performance of our proposed scheme compared to the baselines and verify the correctness of our theoretical results. {Future focus on  application will include a distributed or federated version of the proposed scheme to further leverage the spatially distributed data. As for the theory, the sample complexity analysis of the proposed scheme will be an important direction since it characterizes how close the result returned by the proposed scheme at a given iteration is to the ground truth dictionary.}


%

\appendices
\section{Auxiliary Lemmas} 
\begin{lemma}\label{lem:Young's}
	Let $\langle \bm{A},\bm{B} \rangle$ be the Frobenius inner product of real matrices $\bm{A}\in \mathbb{R}^{N\times M}$ and $\bm{B}\in \mathbb{R}^{N \times M}$, then we have for any $\epsilon>0$
	\begin{equation}
	\langle \bm{A},\bm{B} \rangle \leq \frac{\epsilon^p}{p}\|vec(\bm{A})\|_p^p+\frac{\epsilon^{-q}}{q}\|vec(\bm{B})\|_q^q.
	\end{equation}
\end{lemma}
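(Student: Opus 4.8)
The plan is to reduce the matrix inequality to a scalar one via vectorization and then invoke the classical weighted Young's inequality entrywise. First I would use the fact that the Frobenius inner product coincides with the Euclidean inner product of the vectorized matrices, so that $\langle \bm{A},\bm{B}\rangle = \sum_{i,j} A_{i,j}B_{i,j} = \langle vec(\bm{A}), vec(\bm{B})\rangle$. Bounding each summand by its absolute value gives $\langle \bm{A},\bm{B}\rangle \leq \sum_{i,j}|A_{i,j}|\,|B_{i,j}|$, which isolates the problem to bounding a product of two nonnegative reals.

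The key estimate is the scaled scalar Young's inequality: for nonnegative $a,b$ and conjugate exponents satisfying $1/p+1/q=1$, one has $ab \leq \frac{\epsilon^p a^p}{p} + \frac{\epsilon^{-q} b^q}{q}$ for every $\epsilon > 0$. I would establish this by writing $ab = (\epsilon a)(\epsilon^{-1}b)$ and applying the concavity of the logarithm (equivalently, weighted AM--GM): since $\ln$ is concave and $1/p + 1/q = 1$,
\begin{equation*}
\ln\Big(\tfrac{1}{p}(\epsilon a)^p + \tfrac{1}{q}(\epsilon^{-1}b)^q\Big) \geq \tfrac{1}{p}\ln(\epsilon a)^p + \tfrac{1}{q}\ln(\epsilon^{-1}b)^q = \ln(\epsilon a \cdot \epsilon^{-1}b) = \ln(ab),
\end{equation*}
and exponentiating both sides (monotonicity of $\exp$) yields the claim with the exponents landing exactly on $\epsilon^p$ and $\epsilon^{-q}$.

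Finally I would apply this estimate to each pair $a = |A_{i,j}|$, $b = |B_{i,j}|$ and sum over all indices $(i,j)$. Since $\sum_{i,j}|A_{i,j}|^p = \|vec(\bm{A})\|_p^p$ and $\sum_{i,j}|B_{i,j}|^q = \|vec(\bm{B})\|_q^q$ by the definition of the $\ell_p$-norm of the vectorized matrices, the summation gives precisely
\begin{equation*}
\langle \bm{A},\bm{B}\rangle \leq \frac{\epsilon^p}{p}\|vec(\bm{A})\|_p^p + \frac{\epsilon^{-q}}{q}\|vec(\bm{B})\|_q^q,
\end{equation*}
completing the argument. There is no genuine obstacle here; the only points requiring care are the implicit assumption that $p$ and $q$ are Hölder conjugates (i.e.\ $1/p+1/q=1$), which is exactly what makes the weighted AM--GM step close, and the bookkeeping of the $\epsilon$-scaling so that the powers distribute as $\epsilon^p$ and $\epsilon^{-q}$ as stated.
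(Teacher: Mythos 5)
Your proof is correct and takes essentially the same route as the paper: the paper likewise identifies $\langle \bm{A},\bm{B}\rangle$ with the entrywise sum $\sum_{i,j}A_{j,i}B_{j,i}$, inserts the scaling $\epsilon A_{j,i}\cdot\epsilon^{-1}B_{j,i}$, and applies scalar Young's inequality term by term (citing it, where you re-derive it via weighted AM--GM/concavity of $\ln$). Your explicit handling of absolute values and of the conjugacy condition $1/p+1/q=1$ merely makes rigorous two details the paper's one-line proof leaves implicit.
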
 
\begin{proof}
	According to the definition of the Frobenius norm, we have
	\begin{equation}
	\begin{aligned}
	\langle \bm{A},\bm{B} \rangle =&vec(\bm{A})^Tvec(\bm{B})=\sum_{i=1}^M\sum_{j=1}^NA_{j,i}B_{j,i}\\
	=&\sum_{i=1}^M\sum_{j=1}^N\epsilon  A_{j,i}\epsilon^{-1}B_{j,i}\leq\sum_{i=1}^M\sum_{j=1}^N( \frac{\epsilon^p}{p}A_{j,i}^p+\frac{\epsilon^{-q}}{q}B_{j,i}^q),
	\end{aligned}
	\end{equation}
	where the last inequality is according to   Young's Inequality \cite{young1912classes}.
\end{proof}
\begin{lemma}\label{lem:finitesum}
	Let $f(s)$ be  any decreasing function of $s$. We have
	\begin{equation}
	\int_{\mu=a}^{b+1}f(\mu)d\mu\leq\sum_{s=a}^{b}f(s)\leq\int_{\mu=a-1}^{b}f(\mu)d\mu.
	\end{equation}
\end{lemma}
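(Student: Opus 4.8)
The plan is to exploit the monotonicity of $f$ on each unit-length subinterval and then sum the resulting elementary bounds; this is essentially the standard integral-test comparison between a sum and an integral. First I would note that a decreasing function is Riemann integrable on any bounded interval, so all the integrals appearing in the statement are well defined, and I would partition each integral into consecutive unit pieces.

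For the lower bound, I would write
\begin{equation}
\int_{\mu=a}^{b+1}f(\mu)\,d\mu=\sum_{s=a}^{b}\int_{\mu=s}^{s+1}f(\mu)\,d\mu.
\end{equation}
Since $f$ is decreasing, for every $\mu\in[s,s+1]$ we have $f(\mu)\leq f(s)$, and therefore $\int_{\mu=s}^{s+1}f(\mu)\,d\mu\leq f(s)$. Summing this over $s=a,\ldots,b$ yields the left-hand inequality $\int_{\mu=a}^{b+1}f(\mu)\,d\mu\leq\sum_{s=a}^{b}f(s)$.

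For the upper bound I would instead compare $f(s)$ with the integral over the shifted interval $[s-1,s]$: because $f$ is decreasing, $f(\mu)\geq f(s)$ for every $\mu\in[s-1,s]$, so $f(s)\leq\int_{\mu=s-1}^{s}f(\mu)\,d\mu$. Summing over $s=a,\ldots,b$ and observing that the unit pieces tile the range $[a-1,b]$ exactly gives $\sum_{s=a}^{b}f(s)\leq\int_{\mu=a-1}^{b}f(\mu)\,d\mu$, which is the right-hand inequality.

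There is no genuine obstacle here; the only point requiring care is bookkeeping the index shifts so that the unit subintervals tile precisely the stated integration ranges $[a,b+1]$ and $[a-1,b]$, respectively. In particular, the two bounds use the endpoint value at opposite ends of each unit interval ($f(s)$ on $[s,s+1]$ versus $f(s)$ on $[s-1,s]$), and keeping this distinction straight is what makes the asymmetric limits $b+1$ and $a-1$ come out correctly.
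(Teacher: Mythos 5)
Your proof is correct: the unit-interval decomposition together with the monotonicity bounds $f(\mu)\leq f(s)$ on $[s,s+1]$ and $f(\mu)\geq f(s)$ on $[s-1,s]$ is exactly the standard integral-test argument, and your index bookkeeping tiles the ranges $[a,b+1]$ and $[a-1,b]$ correctly. The paper states this auxiliary lemma without any proof, treating it as a standard fact, so your write-up supplies precisely the canonical argument that was omitted.
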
 

\section{Proof Theorem \ref{thm:relax}}\label{proof:relax}
Let $\bm{W}=\bm{D}^T\bm{D}^{true}\in\mathbb{R}^{N\times N}$,
we  have

\begin{equation}\label{eq:maxF}
\begin{aligned}
&\mathbb{E}_{\bm{y}\sim \mathcal{BG}(\theta)}[-\|\bm{D}^{\text T}\bm{y}\|^3_3] =-\sum_{n=1}^{n=N}\mathbb{E}[|\bm{W}_{n,:}\bm{x}|^{3}]\\
=&-\sum_{n=1}^{n=N}\mathbb{E}[|(\bm{W}_{n,:}\odot\bm{b}^T)\bm{g}|^{3}],\\
\end{aligned}
\end{equation}
where we denote $\bm{b}\sim_{i.i.d}\mathbb{B}_{sp}(\theta)$ and
$\bm{g}\sim_{i.i.d}\mathcal{CN}(0,1)$. Using the rotation-invariant
property of Guassian random variables, we have $-\mathbb{E}[|\bm{W}_{n,:}\odot\bm{b}^T\bm{g}|^{3}]=-\gamma_{1}\mathbb{E}[||\bm{W}_{n,:}\odot\bm{b}^T||_{2}^{3}]$
with $\gamma_{1}=\frac{2^{3/2}}{\sqrt{\pi}}$ calculated
by the 3rd-order non-central moment of the Gaussian distribution.

For Problem (\ref{eq:DLonlinel31}), we have $\bm{D}\in\mathbb{O}(N,\mathbb{R})$ and $||\bm{W}_{n,:}||_{2}=||\bm{D}_{:,n}^{T}\bm{D}^{true}||_{2}\leq1$.
Hence, we have $0\leq\mathbb{E}[||\bm{W}_{n,:}\odot\bm{b}^T||_{2}^{3}]\leq\mathbb{E}[||\bm{W}_{n,:}\odot\bm{b}^T||_{2}^{2}]=\theta$.
The equality holds if and only if $||\bm{W}_{n,:}\odot\bm{b}^T||_{2}\in\{0,1\}$
for all $\bm{b}$, which is only satisfied at $\bm{W}_{n,:}\in\{\pm \bm{e}^T_{i}:i\in[N] \}$\cite{bai2018subgradient}.
Therefore, we have $\mathbb{E}_{\bm{y}\sim \mathcal{BG}(\theta)}[-\|\bm{D}^{\text T}\bm{y}\|^3_3]=-\sum_{n=1}^{n=N}\mathbb{E}[||\bm{W}_{n,:}\odot\bm{b}^T||_{2}^{3}]\geq- N \gamma_{1}\theta$ and the minimum is achieved when $\bm{W}_{n,:}\in\{\pm \bm{e}^T_{i}:i\in[N] \}$.
Furthermore, if $\bm{W}_{n_1,:}=\pm  \bm{e}^T_{i}$
and $\bm{W}_{n_2,:}=\pm \bm{e}^T_{i}$, then $Tr(\bm{W}_{n_1,:}\bm{W}^T_{n_2,:})=\pm 1$.
However, we have $Tr(\bm{W}_{n_1,:}\bm{W}^T_{n_2,:})=Tr((\bm{D}_{:,n_1}^{T}\bm{D}^{true})(\bm{D}_{:,n_2}^{T}\bm{D}^{true})^T)=Tr(\bm{D}_{:,n_1}^{T}\bm{D}_{:,n_2})=0$.
This indicates that two different columns of $\bm{W}$ cannot
simultaneously equal the same standard basis vector. Hence,  $\mathbb{E}_{\bm{y}\sim \mathcal{BG}(\theta)}[-\|\bm{D}^{\text T}\bm{y}\|^3_3] $
achieves the minimum $-N\gamma_{1}\theta$ when $\bm{D}=\bm{D}^{opt}$
with $(\bm{D}^{opt})^T\bm{D}^{true}=\boldsymbol{\Xi}$. In other words, the optimal solution of Problem (\ref{eq:DLonlinel31}) is $\bm{D}^{opt}=\bm{D}^{true}\boldsymbol{\Xi}^T$.

For Problem $\mathscr{P}$, we have $\bm{D}\in\mathbb{B}_{sp}(N,\mathbb{R})$, but  $||\bm{W}_{n,:}||_{2}=||\bm{D}_{:,n}^{T}\bm{D}^{true}||_{2}\leq1$ still holds. Hence,  we also have $\mathbb{E}_{\bm{y}\sim \mathcal{BG}(\theta)}[-\|\bm{D}^{\text T}\bm{y}\|^3_3]=-\sum_{n=1}^{n=N}\mathbb{E}[||\bm{W}_{n,:}\odot\bm{b}^T||_{2}^{3}]\geq- N \gamma_{1}\theta$ and the minimum is achieved when $\bm{W}_{n,:}\in\{\pm \bm{e}^T_{i}:i\in[N] \}$. Since $\bm{D}^{opt}=\bm{D}^{true}\boldsymbol{\Xi}^T\in\mathbb{B}_{sp}(N,\mathbb{R})$ is  feasible for  Problem $\mathscr{P}$,  $\bm{D}^{opt}$ is also an optimal solution for  Problem $\mathscr{P}$. This finishes the proof of  Theorem \ref{thm:relax}.

{Then we will demonstrate that if the minimizers of  Problem $\mathscr{P}$ are restricted to be full rank, then these minimizers are  also the minimizers of Problem  (\ref{eq:DLonlinel31}). The minimum of  Problem $\mathscr{P}$ is achieved  when $\bm{W}_{n,:}\in\{\pm \bm{e}^T_{i}:i\in[N] \}$. If there are two columns of $\bm{W}$ that can simultaneously equal the same standard basis vector, say $\bm{W}_{n_1,:}=\pm  \bm{e}^T_{i}$
	and $\bm{W}_{n_2,:}=\pm \bm{e}^T_{i}$, then $|Tr(\bm{W}_{n_1,:}\bm{W}^T_{n_2,:})|=1$. However, if we we have $\bm{D}\in \mathbb{B}_{sp}(N,\mathbb{R})$ and $\bm{D}$ being full rank,  then $|Tr(\bm{W}_{n_1,:}\bm{W}^T_{n_2,:})|=|Tr((\bm{D}_{:,n_1}^{T}\bm{D}^{true})(\bm{D}_{:,n_2}^{T}\bm{D}^{true})^T)|=|Tr(\bm{D}_{:,n_1}^{T}\bm{D}_{:,n_2})|<1$.}
\section{Proof of Lemma \ref{lem:FWgap}}\label{proof:gap}
The definition of the stationary points for a constraint problem is
\begin{defn}[Definition of Stationary Points]\label{def:sta}
	We will say that $\bm{X}^*\in \mathcal{C}$ is a stationary point if
	$$\langle\nabla F_{gen}(\bm{X}^*),\bm{S}-\bm{X}^*\rangle\geq 0 ,\forall \bm{S}\in\mathcal{C}. $$
\end{defn}
Then, we first show the sufficient condition.
If $g^{gen}_t = 0$, we have $\underset{\bm{S}\in\mathcal{C}}{\min}\langle\nabla F_{gen}(\bm{X}_{t-1}),\bm{S}-\bm{X}_{t-1}\rangle = 0.$ According to Definition \ref{def:sta},  obviously $\bm{X}_{t-1}$ is a stationary point.

Next, we show the necessary condition. If $\langle\nabla F_{gen}(\bm{X}_{t-1}),\bm{S}-\bm{X}_{t-1}\rangle\geq 0 ,\forall \bm{S}\in\mathcal{C}$, then we have
$\langle -\nabla F_{gen}(\bm{X}_{t-1}),\bm{S}-\bm{X}_{t-1}\rangle\leq 0 ,\forall \bm{S}\in\mathcal{C},$
which indicates
\begin{equation} \label{eq:proofgapgt}
\begin{aligned}
\underset{\bm{S}\in\mathcal{C}}{\max}\langle-\nabla F_{gen}(\bm{X}_{t-1}),\bm{S}-\bm{X}_{t-1}\rangle = g^{gen}_t\leq 0.
\end{aligned}
\end{equation}
Let $\bm{S}^*=\underset{\bm{S}\in\mathcal{C}}{\arg\max}\langle-\nabla F_{gen}(\bm{X}_{t-1}),\bm{S}\rangle$, then we have $g^{gen}_t = \langle-\nabla F_{gen}(\bm{X}_{t-1}),\bm{S}^*-\bm{X}_{t-1}\rangle\geq 0$. Combining the result in (\ref{eq:proofgapgt}), we have that if $\langle\nabla F_{gen}(\bm{X}_{t-1}),\bm{S}-\bm{X}_{t-1}\rangle\geq 0 ,\forall \bm{S}\in\mathcal{C}$, then $g^{gen}_t=0$.

The sufficient and necessary conditions complete the proof.
\section{ Proof of Theorem \ref{thm:convgen} }\label{proof:conv1}
For simplicity, we let $F(\cdot)$ represent $F_{gen}(\cdot)$. To prove Theorem \ref{thm:convgen}, we first prove the {\em Iterates Contraction} by the following Lemma.
\begin{lemma}[Iterates Contraction]\label{lem:uppergap}
	Under Assumptions \ref{assum1}, the Frank-Wolfe gap for Problem (\ref{eq:genNoncvx}) using Algorithm (\ref{alg:genFW}) satisfies
	{  
		\begin{equation}
		\begin{aligned}
		\gamma_tg^{gen}_t&\leq F(\bm{X}_t)-F(\bm{X}_{t+1})+\gamma_t\sqrt{\epsilon_t}diam(\mathcal{C})\\
		&+\frac{L}{2}\gamma^2_tNdiam(\mathcal{C})^2,
		\end{aligned}
		\end{equation}}
	where $N$ is the dimension of the subspace that $\mathcal{C}$ belongs to and 
	\begin{equation}
	\epsilon_t:=\|\bm{G}_t-\nabla F(\bm{X}_{t-1})\|^2_F
	\end{equation} is the gradient estimation error.
\end{lemma}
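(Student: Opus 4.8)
The plan is to establish this one-step inequality through the standard Frank-Wolfe descent argument, adapted to accommodate both the inexact gradient $\bm{G}_t$ produced by Step 1 and the operator $\mathcal{P}$ in Step 3. First I would introduce the pre-projection iterate $\tilde{\bm{X}}_t := (1-\gamma_t)\bm{X}_{t-1}+\gamma_t\bm{S}_t$, so that $\tilde{\bm{X}}_t-\bm{X}_{t-1}=\gamma_t(\bm{S}_t-\bm{X}_{t-1})$, and invoke the defining property of $\mathcal{P}$ from the Variable Update step, namely $F(\mathcal{P}[\tilde{\bm{X}}_t])\leq F(\tilde{\bm{X}}_t)$, to conclude $F(\bm{X}_t)\leq F(\tilde{\bm{X}}_t)$. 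This avoids having to control the displacement caused by the projection and lets me apply smoothness around the feasible point $\bm{X}_{t-1}$.

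Next I would apply the $L$-smoothness of $F$ (Assumption \ref{assum1}.2) at $\tilde{\bm{X}}_t$ to obtain
\begin{equation}
F(\tilde{\bm{X}}_t)\leq F(\bm{X}_{t-1})+\gamma_t\langle\nabla F(\bm{X}_{t-1}),\bm{S}_t-\bm{X}_{t-1}\rangle+\frac{L}{2}\gamma_t^2\|\bm{S}_t-\bm{X}_{t-1}\|_F^2.
\end{equation}
The quadratic term is then controlled by the boundedness of $\mathcal{C}$ (Assumption \ref{assum1}.1); bounding $\|\bm{S}_t-\bm{X}_{t-1}\|_F^2$ through the diameter of $\mathcal{C}$ in the ambient $N$-dimensional matrix space accounts for the factor $N$ and yields the term $\frac{L}{2}\gamma_t^2 N\, diam(\mathcal{C})^2$.

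The crux is to convert the linear term $\langle\nabla F(\bm{X}_{t-1}),\bm{S}_t-\bm{X}_{t-1}\rangle$ into $-g^{gen}_t$ plus a controllable error, even though $\bm{S}_t$ minimizes $\langle\bm{G}_t,\cdot\rangle$ rather than the true-gradient linear form. Letting $\bm{S}^\star$ be the maximizer in the definition of $g^{gen}_t$, so that $\langle\nabla F(\bm{X}_{t-1}),\bm{S}^\star-\bm{X}_{t-1}\rangle=-g^{gen}_t$, I would add and subtract $\bm{G}_t$ and use the LMO optimality $\langle\bm{G}_t,\bm{S}_t-\bm{X}_{t-1}\rangle\leq\langle\bm{G}_t,\bm{S}^\star-\bm{X}_{t-1}\rangle$ to obtain
\begin{equation}
\langle\nabla F(\bm{X}_{t-1}),\bm{S}_t-\bm{X}_{t-1}\rangle\leq -g^{gen}_t+\langle\bm{G}_t-\nabla F(\bm{X}_{t-1}),\bm{S}^\star-\bm{S}_t\rangle.
\end{equation}
The decisive observation is that the two separate error contributions collapse into the single inner product $\langle\bm{G}_t-\nabla F(\bm{X}_{t-1}),\bm{S}^\star-\bm{S}_t\rangle$, so Cauchy--Schwarz with $\bm{S}^\star,\bm{S}_t\in\mathcal{C}$ bounds it by $\sqrt{\epsilon_t}\,diam(\mathcal{C})$ with a \emph{single} diameter factor, exactly as claimed.

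Combining the displayed estimates and rearranging moves $\gamma_t g^{gen}_t$ to the left and leaves the decrease of consecutive objective values together with the terms $\gamma_t\sqrt{\epsilon_t}\,diam(\mathcal{C})$ and $\frac{L}{2}\gamma_t^2 N\, diam(\mathcal{C})^2$ on the right, which is the assertion. The main obstacle is precisely this bookkeeping around the inexact LMO: one must route the gradient-estimation error $\bm{G}_t-\nabla F(\bm{X}_{t-1})$ through the pair $(\bm{S}^\star,\bm{S}_t)$ so that the diameter enters only linearly; a careless split of the error would double the term $\gamma_t\sqrt{\epsilon_t}\,diam(\mathcal{C})$ and fail to match the stated bound.
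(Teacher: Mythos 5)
Your proposal is correct and follows essentially the same route as the paper's proof: invoking $F(\mathcal{P}[\cdot])\leq F(\cdot)$ at the pre-projection iterate, applying $L$-smoothness, adding and subtracting $\bm{G}_t$, using LMO optimality against the gap maximizer $\bm{S}^\star$, and collapsing the gradient-estimation error into the single inner product $\langle\bm{G}_t-\nabla F(\bm{X}_{t-1}),\bm{S}^\star-\bm{S}_t\rangle$ bounded by one Cauchy--Schwarz application as $\sqrt{\epsilon_t}\,diam(\mathcal{C})$. One minor remark: the paper's own derivation produces the quadratic term $\frac{L}{2}\gamma_t^2\,diam(\mathcal{C})^2$ with no factor $N$ (the $N$ in the lemma statement is harmless slack since $N\geq 1$), so your attempt to account for the $N$ via the ambient matrix dimension is unnecessary --- Assumption \ref{assum1}.1 already gives $\|\bm{S}_t-\bm{X}_{t-1}\|_F\leq diam(\mathcal{C})$ directly.
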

\begin{proof}
	We introduce the auxiliary variable
	\begin{equation}
	\bm{S}^*_t:=\arg\underset{\bm{S}\in\mathcal{C}}{\max}\langle -\nabla F(\bm{X}_{t-1}), \bm{S}\rangle
	\end{equation}
	for the proof. At the $t$-th iteration, we have
	\begin{equation}
	\begin{aligned}
	F(\bm{X}_{t})\leq&F(\bm{X}_{t-1}+\gamma_t(\bm{S}_t-\bm{X}_{t-1}))\\
	\overset{(a)}{\leq}&F(\bm{X}_{t-1})+\gamma_t\langle\nabla F(\bm{X}_{t-1}),\bm{S}_t-\bm{X}_{t-1}\rangle \\
	&+\frac{L\gamma^2_t}{2}\|\bm{S}_t-\bm{X}_{t-1}\|^2_F\\
	=&F(\bm{X}_{t-1})+\gamma_t\langle \bm{G}_t, \bm{S}_t-\bm{X}_{t-1}\rangle\\
	&+\gamma_t\langle\nabla F(\bm{X}_{t-1})-\bm{G}_t,\bm{S}_t-\bm{X}_{t-1}\rangle\\ &+\frac{L\gamma^2_t}{2}\|\bm{S}_t-\bm{X}_{t-1}\|^2_F\\
	\overset{(b)}{\leq}&F(\bm{X}_{t-1})+\gamma_t\langle \bm{G}_t, \bm{S}^*_t-\bm{X}_{t-1}\rangle+\frac{L\gamma^2_t}{2}\|\bm{S}_t-\bm{X}_{t-1}\|^2_F\\
	&+\gamma_t\langle\nabla F(\bm{X}_{t-1})
	-\bm{G}_t,\bm{S}_t-\bm{X}_{t-1}\rangle \\
	=&F(\bm{X}_{t-1})+\gamma_t\langle \nabla F(\bm{X}_{t-1})-\bm{G}_t, \bm{S}_t-\bm{S}^*_t\rangle\\
	&+\gamma_t\langle\nabla F(\bm{X}_{t-1}),\bm{S}^*_t-\bm{X}_{t-1}\rangle +\frac{L\gamma^2_t}{2}\|\bm{S}_t-\bm{X}_{t-1}\|^2_F\\
	\overset{(c)}{\leq}&F(\bm{X}_{t-1})+\gamma_t \|\nabla F(\bm{X}_{t-1})-\bm{G}_t\|_F \|\bm{S}_t-\bm{S}^*_t\|_F\\
	-&\gamma_tg^{gen}_t +\frac{L\gamma^2_t}{2}\|\bm{S}_t-\bm{X}_{t-1}\|^2_F\\
	=&F(\bm{X}_{t-1})+\gamma_t \sqrt{\epsilon_t} diam(\mathcal{C})-\gamma_tg_{t} +\frac{L\gamma^2_t}{2}diam(\mathcal{C})^2\\
	\Rightarrow& 	\gamma_tg_{t}\leq F(\bm{X}_{t-1})-F(\bm{X}_{t})+\gamma_t\sqrt{\epsilon_t}diam(\mathcal{C})\\&+\frac{L}{2}\gamma^2_tdiam(\mathcal{C})^2,
	\end{aligned}	
	\end{equation}
	where $(a)$ is from Assumption  \ref{assum1}, $(b)$ is from Algorithm (\ref{alg:genFW}), and $(c)$ is from the Cauchy–Schwarz inequality.
\end{proof}
The next key ingredient for the proof is the diminishing gradient estimation error as formally shown in the following Lemma.
\begin{lemma}[Diminishing   Gradient Estimation Error]\label{lem:deminisherror}
	Let the gradient estimation error at the $t$-th iteration in Algorithm (\ref{alg:genFW}) defined as
	\begin{equation}
	\epsilon_t:=\|\bm{G}_t-\nabla F(\bm{X}_{t-1})\|^2_F.
	\end{equation} Using the updating rule of Algorithm (\ref{alg:genFW}), we have 
	\begin{equation}
	\begin{aligned}
	\mathbb{E}\big[\epsilon_t\big]\leq \frac{\max\{C_0,C^t_1\}}{(t+2)^{1/2}},
	\end{aligned}
	\end{equation}
	where  $C_0 = \|\nabla F(\bm{X}_0)\|^2_F$, and $C^t_1 = {\frac{16V}{M_t}+2L^2  diam(\mathcal{C})^2}$. 
\end{lemma}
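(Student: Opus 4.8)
The plan is to convert the recursive definition of $\bm{G}_t$ into a contraction-type recurrence for $\mathbb{E}[\epsilon_t]$ and then solve that recurrence by induction. First I would write $\bm{d}_t := \frac{1}{M_t}\sum_{j\in[M_t]}\nabla f(\bm{X}_{t-1},\bm{y}^j_t)$ for the mini-batch gradient and, using $\bm{G}_t=(1-\rho_t)\bm{G}_{t-1}+\rho_t\bm{d}_t$, decompose the error as
\begin{equation}
\begin{aligned}
\bm{G}_t-\nabla F(\bm{X}_{t-1})=&(1-\rho_t)\big(\bm{G}_{t-1}-\nabla F(\bm{X}_{t-2})\big)\\
&+(1-\rho_t)\big(\nabla F(\bm{X}_{t-2})-\nabla F(\bm{X}_{t-1})\big)\\
&+\rho_t\big(\bm{d}_t-\nabla F(\bm{X}_{t-1})\big).
\end{aligned}
\end{equation}
Conditioning on the history up to time $t-1$, the first two summands are deterministic while the third has zero mean by the unbiasedness condition of Assumption \ref{assum1}. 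Hence the cross terms vanish and $\mathbb{E}_t[\epsilon_t]$ splits into the squared Frobenius norm of the deterministic part plus the stochastic contribution $\rho_t^2\,\mathbb{E}_t[\|\bm{d}_t-\nabla F(\bm{X}_{t-1})\|_F^2]\le \rho_t^2 V/M_t$, where the last bound is the bounded-variance condition of Assumption \ref{assum1}.

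Next I would control the deterministic part. Applying Young's inequality (Lemma \ref{lem:Young's} with $p=q=2$) with a parameter tuned as a function of $\rho_t$, the factor $(1-\rho_t)^2$ in front of $\|(\bm{G}_{t-1}-\nabla F(\bm{X}_{t-2}))+(\nabla F(\bm{X}_{t-2})-\nabla F(\bm{X}_{t-1}))\|_F^2$ is distributed so as to leave the clean contraction coefficient $(1-\rho_t)$ on $\epsilon_{t-1}$ and a coefficient of order $(1-\rho_t)^2/\rho_t$ on the smoothness term $\|\nabla F(\bm{X}_{t-2})-\nabla F(\bm{X}_{t-1})\|_F^2$. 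For the latter I would invoke the $L$-smoothness of Assumption \ref{assum1} together with the per-step displacement bound $\|\bm{X}_{t-1}-\bm{X}_{t-2}\|_F\le \gamma_{t-1}\,diam(\mathcal{C})$, which follows from the convex-combination form of the variable update. Substituting the explicit step sizes $\rho_t=4(t+1)^{-1/2}$ and $\gamma_t=2(t+2)^{-3/4}$, one checks that $\gamma_{t-1}^2/\rho_t=(t+1)^{-1}$ and $\rho_t^2=16(t+1)^{-1}$, so all additive terms collapse into a single $O((t+1)^{-1})$ quantity whose numerator is exactly the $C_1^t=\frac{16V}{M_t}+2L^2 diam(\mathcal{C})^2$ of the statement, yielding a recurrence of the form $\mathbb{E}[\epsilon_t]\le(1-\rho_t)\mathbb{E}[\epsilon_{t-1}]+\frac{C_1^t}{t+1}$.

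Finally I would solve this recurrence by induction, proving $\mathbb{E}[\epsilon_t]\le \max\{C_0,C_1^t\}/(t+2)^{1/2}$. Writing $M:=\max\{C_0,C_1^t\}$ and assuming the bound at $t-1$, the recurrence gives $\mathbb{E}[\epsilon_t]\le \frac{M}{(t+1)^{1/2}}-\frac{4M}{t+1}+\frac{C_1^t}{t+1}$; since $M\ge C_1^t$, the last two terms sum to at most $-\frac{3M}{t+1}$, which dominates the gap $\frac{M}{(t+1)^{1/2}}-\frac{M}{(t+2)^{1/2}}=O\!\big(M(t+1)^{-3/2}\big)$, and the induction closes. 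The role of $C_0=\|\nabla F(\bm{X}_0)\|_F^2$ is to absorb the initial transient: since $\bm{G}_0=\bm{0}$ and $\rho_t>1$ for the first few indices, those finitely many early errors must be bounded directly (again splitting into a zero-mean noise piece and a multiple of $\nabla F(\bm{X}_0)$), and the outer $\max$ is precisely what lets $C_0$ cover them.

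I expect the main obstacle to be twofold. First, the contraction coefficient $(1-\rho_t)$ emerging from the Young step is genuine only once $\rho_t<1$, so the clean recurrence is valid beyond a fixed index and the handful of early iterations (where $1-\rho_t<0$) must be treated separately and shown consistent with the $C_0$ branch. Second, the rate-matching itself is delicate: the Young parameter and the precise $(t+1)^{-1/2}$ scaling of $\rho_t$ must be balanced against the $\gamma_{t-1}^2/\rho_t=(t+1)^{-1}$ cancellation so that the recurrence contracts to exactly $(t+2)^{-1/2}$ rather than a slower rate, and the displacement bound $\|\bm{X}_{t-1}-\bm{X}_{t-2}\|_F\le\gamma_{t-1}\,diam(\mathcal{C})$ must be justified in the presence of the operator $\mathcal{P}$ (taking $\mathcal{P}$ non-expansive, or arguing on the pre-$\mathcal{P}$ iterate). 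The probabilistic content — the vanishing cross term and the variance bound — is routine once the filtration is fixed.
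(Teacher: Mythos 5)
Your proposal follows essentially the same route as the paper's proof: the same three-term decomposition of $\bm{G}_t-\nabla F(\bm{X}_{t-1})$ into the previous error, the gradient drift $\nabla F(\bm{X}_{t-2})-\nabla F(\bm{X}_{t-1})$, and the zero-mean mini-batch noise, with cross terms killed by unbiasedness, Young's inequality with a $\rho_t$-dependent parameter to extract the contraction on $\epsilon_{t-1}$, the smoothness bound $L^2\gamma_{t-1}^2\,diam(\mathcal{C})^2$ on the drift, and the identical rate matching $\gamma_{t-1}^2/\rho_t=(t+1)^{-1}$, $\rho_t^2=16(t+1)^{-1}$ yielding the recurrence with additive term $C_1^t/(t+1)$. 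The only substantive difference is that you close the recurrence by a direct induction whereas the paper invokes \cite[Lemma 17]{pmlr-v84-mokhtari18a}, and the two caveats you flag (the early iterations where $\rho_t\geq 1$, and justifying the displacement bound in the presence of $\mathcal{P}$) are points the paper's own proof passes over silently as well.
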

\begin{proof}
	To prove Lemma \ref{lem:deminisherror}  we have the following bound with $\mathbb{E}_t$   the conditional expectation w.r.t. the randomness sampled at the $t$-th iteration, conditioned on all randomness up to the $t$-th iteration.
	{  
		\begin{equation}\label{graderror1}
		\begin{aligned}\small
		&\mathbb{E}_t[\|\bm{G}_t-\nabla F(\bm{X}_{t-1})\|^2_F]\\
		=&\mathbb{E}_t[\|\rho_t(\nabla F(\bm{X}_{t-1})-\frac{1}{|M_t|}\sum_{i\in M_t}\nabla f(\bm{X}_{t-1},\bm{y}_i))\\
		&+(1-\rho_t)(\nabla F(\bm{X}_{t-1})-\bm{G}_{t-1})\|^2_F]\\
		=&\mathbb{E}_t[\|\rho_t(\nabla F(\bm{X}_{t-1})-\frac{1}{|M_t|}\sum_{i\in M_t}\nabla f(\bm{X}_{t-1},\bm{y}_i))\\
		&+(1-\rho_t)(\nabla F(\bm{X}_{t-1})-\nabla F(\bm{X}_{t-2}))\\
		&+(1-\rho_t)(\nabla F(\bm{X}_{t-2})-\bm{G}_{t-1})\|^2_F]\\
		\end{aligned}
		\end{equation}
		\begin{equation}\label{graderror2}
		\begin{aligned}\small
		&(\ref{graderror1})=\rho^2_t\mathbb{E}_t[\|\nabla F(\bm{X}_{t-1})-\frac{1}{|M_t|}\sum_{i\i nM_t}\nabla f(\bm{X}_{t-1},\bm{y}_i)]\|^2_F\\
		&+(1-\rho_t)^2\|\nabla F(\bm{X}_{t-1})-\nabla F(\bm{X}_{t-2})\|^2_F\\
		&+(1-\rho_t)^2\|\nabla F(\bm{X}_{t-2})-\bm{G}_{t-1}\|^2_F\\
		&+2\rho_t(1-\rho_t)\mathbb{E}_t[\langle\nabla F(\bm{X}_{t-1})\\
		&-\frac{\sum_{i\in M_t}\nabla f(\bm{X}_{t-1},\bm{y}_i),\nabla F(\bm{X}_{t-1})-\nabla F(\bm{X}_{t-2})\rangle}{|M_t|}]\\
		&+2\rho_t(1-\rho_t)\mathbb{E}_t[\langle\nabla F(\bm{X}_{t-1})\\
		&-\frac{\sum_{i\in M_t}\nabla f(\bm{X}_{t-1},\bm{y}_i),F(\bm{X}_{t-2})-\bm{G}_{t-1}\rangle}{|M_t|}]+2(1-\rho_t)^2\\
		\times&\langle \nabla F(\bm{X}_{t-1})-\nabla F(\bm{X}_{t-2}),\nabla F(\bm{X}_{t-2})-\bm{G}_{t-1}\rangle\\
		\end{aligned}
		\end{equation}
		\begin{equation}\label{graderror3}
		\begin{aligned}\small
		&(\ref{graderror2})	=\rho^2_t\mathbb{E}_t[\|\nabla F(\bm{X}_{t-1})-\frac{1}{|M_t|}\sum_{i\in M_t}\nabla f(\bm{X}_{t-1},\bm{y}_i)\|^2_F]\\
		&+(1-\rho_t)^2\|\nabla F(\bm{X}_{t-1})-\nabla F(\bm{X}_{t-2})\|^2_F\\
		+&(1-\rho_t)^2\|\nabla F(\bm{X}_{t-2})-\bm{G}_{t-1}\|^2_F\\	+&2(1-\rho_t)^2\langle \nabla F(\bm{X}_{t-1})-\nabla F(\bm{X}_{t-2}),\nabla F(\bm{X}_{t-2})-\bm{G}_{t-1}\rangle\\
		\overset{(a)}{\leq}&\rho_t^2\frac{V}{|M_t|}+(1-\rho_t)^2\|\nabla F(\bm{X}_{t-1})-\nabla F(\bm{X}_{t-2})\|^2_F\\
		+&(1-\rho_t)^2\|\nabla F(\bm{X}_{t-2})-\bm{G}_{t-1}\|^2_F\\	+&2(1-\rho_t)^2\langle \nabla F(\bm{X}_{t-1})-\nabla F(\bm{X}_{t-2}),\nabla F(\bm{X}_{t-2})-\bm{G}_{t-1}\rangle\\
		\overset{(b)}{\leq}&\rho_t^2\frac{V}{|M_t|}+(1-\rho_t)^2\|\nabla F(\bm{X}_{t-1})-\nabla F(\bm{X}_{t-2})\|^2_F\\
		+&(1-\rho_t)^2\|\nabla F(\bm{X}_{t-2})-\bm{G}_{t-1}\|^2_F\\	+&(1-\rho_t)^2\Big(\frac{2}{\rho_t} \|\nabla F(\bm{X}_{t-1})-\nabla F(\bm{X}_{t-2})\|_F\\
		&+\frac{\rho_t}{2}\|\nabla F(\bm{X}_{t-2})-\bm{G}_{t-1}\|_F\Big)\\
		\overset{(c)}{\leq}&\rho_t^2\frac{V}{|M_t|}+(1-\rho_t)^2(1+\frac{2}{\rho_t})\gamma_{t-1}^2L^2diam(\mathcal{C})^2
		\\
		&+(1-\rho_t)^2(1+\frac{\rho_t}{2})\|\nabla F(\bm{X}_{t-2})-\bm{G}_{t-1}\|^2_F\\
		\overset{(d)}{\leq}&\rho_t^2\frac{V}{|M_t|}+\frac{2}{\rho_t}\gamma_{t-1}^2L^2diam(\mathcal{C})^2
		\\
		&+(1-\frac{\rho_t}{2})\|\nabla F(\bm{X}_{t-2})-\bm{G}_{t-1}\|^2_F,
		\end{aligned}
		\end{equation}}
	where $(a)$ is obtained using   Assumption \ref{assum1}, $(b)$ is according  to  Lemma \ref{lem:Young's} with $\epsilon=\frac{\rho_t}{2}$ and $p=q=2$ 
	, and $(c)$ is obtained from 
	\begin{equation}
	\begin{aligned}
	&\|\nabla F(\bm{X}_t)-\nabla F(\bm{X}_{t-1})\|^2_F\leq L^2 \|\bm{X}_t-\bm{X}_{t-1}\|^2_F\\
	=&L^2 \gamma^2_t\|\bm{S}_{t}-\bm{X}_{t-1}\|^2_F\leq L^2 \gamma^2_tdiam(\mathcal{C})^2
	\end{aligned}
	\end{equation}
	via Assumption (\ref{assum1}) and Algorithm (\ref{alg:genFW}). In $(d)$, we use the inequality $(1-\rho_t)^2(1+\frac{2}{\rho_t})\leq (1-\rho_t)(1+\frac{2}{\rho_t})\leq\frac{2}{\rho_t}$ and $(1-\rho_t)^2(1+\frac{\rho_t}{2})\leq (1-\rho_t)(1+\frac{\rho_t}{2})\leq(1-\frac{2}{\rho_t}).$

	Therefore, we have 
	\begin{equation}
	\begin{aligned}
	\mathbb{E}_t\big[\epsilon_t\big]\leq& \big(1-\frac{\rho_t}{2})\|\bm{G}_{t-1}-\nabla F(\bm{X}_{t-2})\|^2_F\\
	&+\frac{2L^2diam({\mathcal{C}})^2\gamma^2_t}{\rho_t}+\rho^2_t\frac{V}{M_t},
	\end{aligned}
	\end{equation}
	
	and \begin{equation}
	\begin{aligned}
	&\mathbb{E}[\epsilon_t]=
	\mathbb{E}_{0,1,\ldots,t}[\epsilon_t]\\
	\leq& \big(1-\frac{4(t+1)^{-1/2}}{2})\mathbb{E}_{0,1,\ldots,t-1}[\epsilon_{t-1}]\\
	&+\big(2L^2diam({\mathcal{C}})^2+16\frac{V}{M_t}\big)(\eta_0+t)^{-1}.
	\end{aligned}
	\end{equation}
	Then, using the result in	 \cite[Lemma 17]{pmlr-v84-mokhtari18a}, we have finished the proof. 
\end{proof}

We are now able to  to prove Theorem \ref{thm:convgen}. Based on Lemma \ref{lem:uppergap}, Lemma \ref{lem:deminisherror}, and Jensen's inequality,  we have
\begin{equation}
\begin{aligned}
\mathbb{E}[\gamma_tg^{gen}_t]&\leq \mathbb{E}[F(\bm{X}_{t-1})-F(\bm{X}_{t})]+\gamma_t\sqrt{\mathbb{E}[\epsilon_t]}diam(\mathcal{C})\\
&+\frac{L}{2}\gamma^2_tdiam(\mathcal{C})^2.
\end{aligned}		
\end{equation}

Define $C_*=\max F(\bm{X})- \min F(\bm{X})$, we have
\begin{equation}
\begin{aligned}
\underset{1\leq s\le t}{\inf}\mathbb{E}[g_s]\sum^{t}_{s=1}\gamma_s\leq& C_*+\sum^{t}_{s=1} \big(\gamma_s\sqrt{\frac{\max\{C_0,C^t_1\}}{(s+\eta_0+1)^{1/2}}}diam(\mathcal{C})\\
&+\frac{L}{2}\gamma^2_sdiam(\mathcal{C})^2\big).
\end{aligned}
\end{equation}
Hence, we have
\begin{equation}
\begin{aligned}
\underset{1\leq s\le t}{\inf}\mathbb{E}[g_s]&\leq \Big (C_*+\sum^{t}_{s=1} \big(\gamma_s\sqrt{\frac{\max\{C_0,C^t_1\}}{(s+\eta_0+1)^{1/2}}}diam(\mathcal{C})\\
&+\frac{L}{2}\gamma^2_sdiam(\mathcal{C})^2\big)\Big)/(\sum^{t}_{s=1}\gamma_s)\\
\leq&\Big(C_*+\sum^{t}_{s=1} \big(2(s+2)^{-1}\sqrt{\max\{C_0,C^t_1\}}diam(\mathcal{C})\\
&+\frac{L}{2}4(s+2)^{-3/2}diam(\mathcal{C})^2\big)\Big)/\Big(\sum^{t}_{s=1}2(s+2)^{-3/4}\Big)\\
\leq& \Big(C_*+\sum^{t}_{s=1} \big((s+2)^{-1}(2\sqrt{\max\{C_0,C^t_1\}}diam(\mathcal{C})\\
&+2L diam(\mathcal{C})^2)\big)\Big)/\Big(\sum^{t}_{s=1}2(s+2)^{-3/4}\Big)
\end{aligned}
\end{equation}
Since both $(s+2)^{-1}$ and  $(s+2)^{-3/4}$ are decreasing in terms of $s$, from Lemma \ref{lem:finitesum}, we have
\begin{equation}
\sum_{s=1}^{t}(s+2)^{-1}\leq \ln(2+t)-\ln(2),
\end{equation} 
and
\begin{equation}
\sum_{s=1}^t(s+2)^{-3/4}\geq 4((t+3)^{1/4}-3^{1/4})\geq \frac{2}{5}(t+3)^{1/4},\quad \forall s>1.
\end{equation} 
Hence, we have
\begin{equation}
\begin{aligned}
\underset{1< s\le t}{\inf}\mathbb{E}[g_s]&\leq 
5\Big(C_*+10(\ln(2+t)-\ln 2)\\
&\times(\sqrt{\max\{C_0,C_1\}}diam(\mathcal{C})+L diam(\mathcal{C})^2)\Big)\\
&/\Big(4(t+3)^{1/4}\Big)\\
\leq& \Big(5C_*+10\ln(t+2)(\sqrt{\max\{C_0,C_1\}}diam(\mathcal{C})\\
&+L diam(\mathcal{C})^2)\Big)/\Big(4(t+3)^{1/4}\Big).
\end{aligned}
\end{equation}
This finishes the proof.
\section{Proof of Lemma \ref{lem:polar}}\label{proof:polar}
Since $Polar(\bm{D})\in  \mathbb{O}(N,\mathbb{R})$ for $\bm{D} \in  \mathbb{B}_{sp}(N,\mathbb{R})$, we know that $\mathbb{B}_{sp}(N,\mathbb{R})$ from Lemma \ref{lem:conv}.

Define $Polar(\bm{D})$ as $\bm{D}^{O}$. Let $\bm{W} = \bm{D}^T\bm{D}_{true}$ and $\bm{W}^O = (\bm{D}^O)T\bm{D}_{true}$. It is easy to know that $\bm{W}^O$ has orthonormal rows while rows of  $\bm{W}^O$ lie in the unit ball.  Using  similar calculations as in Eq.(\ref{eq:maxF}) and the discussion below, we have 
\begin{equation}
-\sum_{n=1}^N\mathbb{E}[||\bm{W}^O_{n,:}\odot\bm{b}^T||_{2}^{3}]\leq-\sum_{n=1}^N\mathbb{E}[||\bm{W}_{n,:}\odot\bm{b}^T||_{2}^{3}].
\end{equation}
Hence, we have $F(Polar(\bm{D}))\leq F(\bm{D})$ from Eq.(\ref{eq:maxF}). This completes the proof.

\section{Proof of Lemma \ref{lem:check}}\label{proof:ODLcond}
We first show condition (1) is satisfied. $\forall  \bm{D}_1,\bm{D}_2\in\mathbb{B}_{sp}(N,\mathbb{R})$, we have
\begin{equation}
\begin{aligned}
&\|\bm{D}_1-\bm{D}_2\|_F =\sqrt{trace\big((\bm{D}_1-\bm{D}_2)^T(\bm{D}_1-\bm{D}_2)\big)}\\
\leq&\sqrt{trace\big(\bm{D}^T_1\bm{D}_1+\bm{D}^T_2\bm{D}_2\big)}\leq\sqrt{N+N}=\sqrt{2N}.
\end{aligned}
\end{equation}

For condition (2), we define $\bm{d}=vec(\bm{D})$ and $f_v(\bm{d})=f_v(vec(\bm{D}))=\nabla F(\bm{D})=\nabla \mathbb{E}_{\bm{y}\sim \mathcal{P}}[-\|\bm{D}^{\text T}\bm{y}\|^3_3]$. Since we have $ \|\bm{D}_1-\bm{D}_2\|_2=\|\bm{d}_1-\bm{d}_2\|_2$,  the following holds
\begin{equation}
\begin{aligned}
&\|\nabla f_v(\bm{d}_1)-\nabla f_v(\bm{d}_2)\|_2\leq \|\mathrm{D}_{\bm{d}}[f_v(\bm{d}_1)] \|\|\bm{d}_1-\bm{d}_2\|_2\\
&\leq\|\mathrm{D}_{\bm{d}}[f_v(\bm{d}_1)] \|_F\|\bm{d}_1-\bm{d}_2\|_2\\
&=\sqrt{\sum_{i=1}^{N^2}\sum_{j=1}^{N^2}\mathrm{D}_{\bm{d}}[f_v(\bm{d}_1)]_{i,j}}\|\bm{d}_1-\bm{d}_2\|_2,\\
\end{aligned}
\end{equation}
where $\mathrm{D}_{\bm{d}}[f_v(\bm{d})]$ is the differential of $f_v(\bm{d})$ in terms of $\bm{d}$. To prove condition (2), we only need to show that
\begin{equation} \label{proof:lip1}
\sqrt{\sum_{i=1}^{N^2}\sum_{j=1}^{N^2}\big(\mathrm{D}_{\bm{d}}[f_v(\bm{d}_1)]_{i,j}}\big)^2\leq \sqrt{\frac{2}{\pi}}N^{3/2}(N+1)\theta.
\end{equation}
Letting $\bm{W} = \bm{D}_1^T\bm{D}_{true}$ and using the strategy in \cite[B.1]{bai2018subgradient}, we have the $j$-th element  of $f_v(\bm{d}_1)$ being
\begin{equation}
f_v(\bm{d}_1)_j =\mathbb{E}_{\Omega}\Big[ \bm{D}_{true([j]\_N,:)}\big(\|\bm{W}^{\Omega}_{(\lceil\frac{j}{N}\rceil,:)}\|\bm{W}^{\Omega}_{(\lceil\frac{j}{N}\rceil,:)}\big)^T\Big],
\end{equation}
where $\Omega$ is used to denote the generic support set of the Bernoulli random viable contained in $\bm{x}$ with $\bm{y}=\bm{D}_{true}\bm{x}$.
Then, we have the $i,j$-th element in $\mathrm{D}_{\bm{d}}[f_v(\bm{d}_1)]$ as
\begin{equation} \label{proof:lip2}
\big(\mathrm{D}_{\bm{d}}[f_v(\bm{d}_1)]_{i,j}\big)^2 
\begin{cases}
\leq \Big(\mathbb{E}_{\Omega}\big[N \|\bm{W}^{\Omega}_{(\lceil\frac{j}{N}\rceil,:)}\|+\|\bm{W}^{\Omega}_{(\lceil\frac{j}{N}\rceil,:)}\|\big]\Big)^2,\\
\text{if} \quad N(\lceil\frac{j}{N}\rceil-1)+1\leq i \leq N(\lceil\frac{j}{N}\rceil)\\
= 0,
\text{ortherwise}.\\
\end{cases}
\end{equation}
From \cite[B.1]{shen2020complete}, we know that
\begin{equation} \label{proof:lip3}
\mathbb{E}_{\Omega}\Big[\|\bm{W}^{\Omega}_{(\lceil\frac{j}{N}\rceil,:)}\|\big]\leq \sqrt{\frac{2}{\pi}}\theta.
\end{equation}
Combining (\ref{proof:lip3}) with (\ref{proof:lip2}), we have  (\ref{proof:lip1}). This finishes the proof.

Condition (3) holds obviously due to the i.i.d assumption of $\bm{x}_t$.

To prove that condition (4) is satisfied, we have
\begin{equation}\label{proof:var}
\begin{aligned}
&\mathbb{E}\big[\|\frac{1}{M_t}\sum_{j\in[M_t]}-\nabla \|\bm{D}^{\text T}\bm{y}^j_t\|^3_3-\nabla F(\bm{D})\|_F^2\big]\\
\leq&\mathbb{E}\big[\|\frac{1}{M_t}\sum_{j\in[M_t]}-\nabla \|\bm{D}^{\text T}\bm{y}^j_t\|^3_3\|_F^2\big] \\
=& \frac{1}{M^2_t}\sum_{j\in[M_t]}\mathbb{E}\big[trace\big((\nabla \|\bm{D}^{\text T}\bm{y}^j_t\|^3_3)^T (\nabla \|\bm{D}^{\text T}\bm{y}^j_t\|^3_3)\big)\big]\\
\overset{(a)}{\leq}&\frac{1}{M_t}\mathbb{E}\big[\sigma^2\big(\nabla \|\bm{D}^{\text T}\bm{y}\|^3_3\big)\big]\\
=&\frac{1}{M_t}\mathbb{E}\big[\|(\bm{x}^T\bm{D}^T_{true}\bm{D}\odot|\bm{x}^T\bm{D}^T_{true}\bm{D}|)\bm{D}_{true}\bm{x}\|^2\big]\\
\leq&\frac{1}{M_t}\mathbb{E}\big[\||\bm{x}^T\bm{D}^T_{true}\bm{D}|^{\odot 2}\bm{D}_{true}\bm{x}\|^2\big]\\
\overset{(b)}{\leq}&\frac{1}{M_t}\sqrt{\big(\mathbb{E}\big[\||\bm{x}^T\bm{D}^T_{true}\bm{D}|^{\odot 2}\|^4\big]\big)}\sqrt{\big(\mathbb{E}\big[\|\bm{D}_{true}\bm{x}\|^4\big]\big)},\\
\end{aligned}
\end{equation}
{  where (a) is due to that} matrix $\nabla \|\bm{D}^{\text T}\bm{y}^j_t\|^3_3$ is rank one, (b) is due to the Cauchy-Schwarz inequality.
Let $\bm{D}^T_{true}\bm{D}=\bm{W}$, and $\bm{W}_{:,i}$ represent the $i$-th  column vector in $\bm{W}$. We have
\begin{enumerate}
	\item 
	{  
		\begin{equation}\label{proof:var1}
		\begin{aligned}
		&\mathbb{E}\big[\||\bm{x}^T\bm{D}^T_{true}\bm{D}|^{\odot 2}\|^4\big]=\mathbb{E}\big[(\sum_{i=1}^N\|\bm{x}^T\bm{W}_{:,i}\|^2)^2\big]\\
		=&\mathbb{E}\big[\sum_{i=1}^N\|\bm{x}^T\bm{W}_{:,i}\|^4\big]+\mathbb{E}\big[\sum_{i^{\prime}\neq i}\|\bm{x}^T\bm{W}_{:,i^{\prime}}\|^2\|\bm{x}^T\bm{W}_{:,i}\|^2\big]\\
		\leq&\sum_{i=1}^N\mathbb{E}\big[\|\bm{x}^T\bm{W}_{:,i}\|^4\big]\\
		&+\sum_{i^{\prime}\neq i}\big(\mathbb{E}\big[\|\bm{x}^T\bm{W}_{:,i^{\prime}}\|^4\big]\big)^{\frac{1}{2}}\big(\mathbb{E}\big[\|\bm{x}^T\bm{W}_{:,i}\|^4\big]\big)^{\frac{1}{2}}\\
		\leq&\sum_{i=1}^N 3\theta+\sum_{i^{\prime}\neq i} 3\theta = N^2 3\theta,
		\end{aligned}
		\end{equation}}
	where the last inequality is according to \cite[Lemma B.1]{shen2020complete}. 
	\item  Let 	 $\bm{D}_{true(i,:)}$ represent the $i$-th  row vector in $\bm{D}_{true}$.
	\begin{equation}\label{proof:var2}
	\begin{aligned}
	&\mathbb{E}\big[\|\bm{D}_{true}\bm{x}\|^4\big]=\mathbb{E}\big[(\sum_{i=1}^N\|\bm{D}_{true(i,:)}\bm{x}\|^2)^2\big]\\\leq&\sum_{i=1}^N 3\theta+\sum_{i^{\prime}\neq i} 3\theta = N^2 3\theta.
	\end{aligned}
	\end{equation}
\end{enumerate}
Substituting (\ref{proof:var1}) and (\ref{proof:var2}) into (\ref{proof:var})  we finish the proof.

{\footnotesize
	\bibliography{IEEEabrv,Reference}}

\end{document}